\newif\ifisarxiv
\newif\ifDRAFT
\newcommand{\marrow}{\marginpar[\hfill$\longrightarrow$]{$\longleftarrow$}}
\newcommand{\niceremark}[3]
   {\textcolor{red}{\textsc{#1 #2:} \marrow\textsf{#3}}}
\newcommand{\ken}[2][says]{\niceremark{Ken}{#1}{#2}}
\newcommand{\michael}[2][says]{\niceremark{Michael}{#1}{#2}}
\newcommand{\michal}[2][says]{\niceremark{Michal}{#1}{#2}}
\newcommand{\feynman}[2][says]{\niceremark{Feynman}{#1}{#2}}
\newcommand{\ken}[1]{}
\newcommand{\michael}[1]{}
\newcommand{\michal}[1]{}
\newcommand{\feynman}[1]{}
\def\ee{\mathrm{e}}
\newenvironment{proofof}[2]{\par\vspace{2mm}\noindent\textbf{Proof of {#1} {#2}}\ }{\hfill\BlackBox}
\def\Sigmab{\mathbf{\Sigma}}
\def\S{\mathbf{S}}
\def\T{\mathbf{T}}
\def\g{{\mathbf{g}}}
\def\W{\mathbf W}
\def\H{\mathbf H}
\def\K{\mathbf K}
\newcommand{\BlackBox}{\rule{1.5ex}{1.5ex}}  % end of proof
\DeclareMathOperator*{\argmin}{\mathop{\mathrm{argmin}}}
\def\x{\mathbf x}
\def\a{\mathbf a}
\def\b{\mathbf b}
\def\v{\mathbf v}
\def\zero{\mathbf 0}
\def\one{\mathbf 1}
\def\X{\mathbf X}
\def\B{\mathbf B}
\def\A{\mathbf A}
\def\C{\mathbf C}
\def\U{\mathbf U}
\def\D{\mathbf D}
\def\V{\mathbf V}
\def\Z{\mathbf Z}
\def\I{\mathbf I}
\def\A{\mathbf A}
\def\P{\mathbf P}
\def\E{\mathbb E}
\def\R{\mathbb R}
\def\Pr{\mathrm{Pr}}
\def\tr{\mathrm{tr}}
\def\rank{\mathrm{rank}}
\let\origtop\top
\renewcommand\top{{\scriptscriptstyle{\origtop}}} % this makes transpose not so big
\definecolor{silver}{cmyk}{0,0,0,0.3}
\definecolor{yellow}{cmyk}{0,0,0.9,0.0}
\definecolor{reddishyellow}{cmyk}{0,0.22,1.0,0.0}
\definecolor{black}{cmyk}{0,0,0.0,1.0}
\definecolor{darkYellow}{cmyk}{0.2,0.4,1.0,0}
\definecolor{orange}{cmyk}{0.0,0.7,0.9,0}
\definecolor{darkSilver}{cmyk}{0,0,0,0.1}
\definecolor{grey}{cmyk}{0,0,0,0.5}
\definecolor{darkgreen}{cmyk}{0.6,0,0.8,0}
\newenvironment{proof}{\par\noindent{\bf Proof\ }}{\hfill\BlackBox\\[2mm]}
\newtheorem{theorem}{Theorem}
\newtheorem{condition}{Condition}
\newtheorem{lemma}{Lemma}
\newtheorem{remark}{Remark}
\newtheorem{corollary}{Corollary}
\newtheorem{definition}{Definition}
\title{Precise expressions for random projections: \\
  Low-rank approximation and randomized Newton\footnotemark}
\author{
          \textbf{Micha{\l } Derezi\'{n}ski} \\
  Department of Statistics\\
  University of California, Berkeley\\
  \texttt{mderezin@berkeley.edu}\\
  \and
  \textbf{Feynman Liang} \\
  Department of Statistics\\
  University of California, Berkeley\\
  \texttt{feynman@berkeley.edu}
  \and
   \textbf{Zhenyu Liao} \\
  ICSI and Department of Statistics\\
  University of California, Berkeley\\
  \texttt{zhenyu.liao@berkeley.edu}
  \and
   \textbf{Michael W. Mahoney}\\
  ICSI and Department of Statistics\\
  University of California, Berkeley\\
  \texttt{mmahoney@stat.berkeley.edu}
}
\author{%
 \textbf{Micha{\l } Derezi\'{n}ski} \\
  Department of Statistics\\
  University of California, Berkeley\\
  \texttt{mderezin@berkeley.edu}\\
  \And
  \textbf{Feynman Liang} \\
  Department of Statistics\\
  University of California, Berkeley\\
  \texttt{feynman@berkeley.edu}
  \And
   \textbf{Zhenyu Liao} \\
  ICSI and Department of Statistics\\
  University of California, Berkeley\\
  \texttt{zhenyu.liao@berkeley.edu}
  \And
   \textbf{Michael W. Mahoney}\\
  ICSI and Department of Statistics\\
  University of California, Berkeley\\
  \texttt{mmahoney@stat.berkeley.edu}
  % David S.~Hippocampus\thanks{Use footnote for providing further information
  %   about author (webpage, alternative address)---\emph{not} for acknowledging
  %   funding agencies.} \\
  % Department of Computer Science\\
  % Cranberry-Lemon University\\
  % Pittsburgh, PA 15213 \\
  % \texttt{hippo@cs.cranberry-lemon.edu} \\
  % examples of more authors
  % \And
  % Coauthor \\
  % Affiliation \\
  % Address \\
  % \texttt{email} \\
  % \AND
  % Coauthor \\
  % Affiliation \\
  % Address \\
  % \texttt{email} \\
  % \And
  % Coauthor \\
  % Affiliation \\
  % Address \\
  % \texttt{email} \\
  % \And
  % Coauthor \\
  % Affiliation \\
  % Address \\
  % \texttt{email} \\
}
\begin{document}
\maketitle

\begin{abstract}
It is often desirable to reduce the dimensionality of a large dataset
by projecting it onto a low-dimensional subspace.  Matrix sketching
has emerged as a powerful technique for performing such dimensionality
reduction very efficiently.  Even though there is an extensive
literature on the worst-case performance of sketching, existing
guarantees are typically very different from what is observed in
practice.  We exploit recent developments in the spectral analysis of
random matrices to develop novel techniques that provide provably
accurate expressions for the expected value of random projection
matrices obtained via sketching.  These expressions can be used to
characterize the performance of dimensionality reduction in a variety
of common machine learning tasks, ranging from low-rank approximation
to iterative stochastic optimization.  Our results apply to several
popular sketching methods, including Gaussian and Rademacher sketches,
and they enable precise analysis of these methods in terms of spectral
properties of the data.  Empirical results show that the expressions
we derive reflect the practical performance of these sketching
methods, down to lower-order effects and even constant factors. 
\end{abstract}

{\renewcommand{\thefootnote}{\fnsymbol{footnote}}
\footnotetext[1]{This version of the paper includes a correction to
  the assumptions in a technical result, Theorem
  \ref{t:main-tech}. The previous claim relied on a formulation of the
  Hanson-Wright inequality given by \cite[Corollary
  2.8]{zajkowski2020bounds}, which turns out to be false. This was not
  essential for our main results, so none of the other claims are
  affected by this change. The conference version of this paper, i.e.,
\cite{precise-expressions}, does not include the correction, so we
recommend to cite this arXiv version when referencing Theorem~\ref{t:main-tech}. }}

\section{Introduction}
Many settings in modern machine learning, optimization and scientific
computing require us to work with data matrices that are so large that
some form of dimensionality reduction is a necessary component of the
process. One of the most popular families of methods for dimensionality reduction,
coming from the literature on Randomized Numerical Linear Algebra
(RandNLA), consists of data-oblivious sketches \cite{Mah-mat-rev_JRNL,tropp2011structure,woodruff2014sketching}. 
Consider a large $m\times n$
matrix $\A$. A \emph{data-oblivious sketch} of size $k$ is the matrix $\S\A$,
where $\S$ is a $k\times m$ random matrix such that
$\E[\frac1k\S^\top\S]=\I$, whose distribution does not
depend on $\A$. This sketch reduces the first dimension of $\A$ from
$m$ to a much smaller $k$ (we assume without loss of generality that
$k \ll n \le m$), and an analogous procedure can be defined for
reducing the second 
dimension as well. This approximate representation of $\A$ is central
to many algorithms in areas such as linear regression, low-rank approximation, kernel methods,
and iterative second-order optimization. While there is a long line of research
aimed at bounding the worst-case approximation error of such
representations, these bounds are often too loose to reflect accurately
the practical performance of these methods. In this paper, we develop new theory which
enables more precise analysis of the accuracy of sketched data
representations.

A common way to measure the accuracy of the sketch $\S\A$ is by
considering the $k$-dimensional subspace spanned by its rows. The goal
of the sketch is to choose a subspace that best aligns with the
distribution of all of the $m$ rows of $\A$ in $\R^n$. Intuitively, our goal
is to minimize the (norm of the) residual when projecting a vector $\a\in\R^n$ onto that
subspace, i.e., $\a - \P\a=(\I-\P)\a$, where
$\P = (\S\A)^\dagger\S\A$ is the orthogonal projection matrix onto the subspace
spanned by the rows of $\S\A$ (and $(\cdot)^\dagger$ denotes the
Moore-Penrose pseudoinverse). For this reason, the quantity that has appeared 
ubiquitously in the error analysis of RandNLA sketching is what we call the residual
projection matrix:
\begin{align*}
  \textbf{(residual projection matrix)}\quad \P_{\!\perp}\ :=\ \I - \P\ =\ \I
  - (\S\A)^\dagger\S\A.
\end{align*}
Since $\P_{\!\perp}$ is random, the average performance of the sketch
can often be characterized by its expectation, $\E[\P_{\!\perp}]$.
For example, the low-rank approximation error of the
sketch can be expressed as
$\E[\|\A - \A\P\|_F^2]= \tr\,\A^\top\A\,\E[\P_{\!\perp}]$, where
$\|\cdot\|_F$ denotes the Frobenius norm.  A similar formula follows for
the trace norm error of a sketched Nystr\"om approximation
\cite{Williams01Nystrom,revisiting-nystrom}.
Among others, this approximation error
appears in the analysis of sketched kernel 
ridge regression \cite{fanuel2020diversity} and Gaussian process
regression \cite{sparse-variational-gp}. Furthermore, a variety of
iterative algorithms, such as randomized second-order methods for
convex optimization \cite{Qu2015Feb,Qu2016,Gower2019,jacsketch}
and linear system solvers based on 
the generalized Kaczmarz method \cite{generalized-kaczmarz},
have convergence guarantees which depend on the extreme eigenvalues of
$\E[\P_{\!\perp}]$. Finally, a generalized form of the expected
residual projection has been recently used to model the implicit
regularization of the interpolating solutions in over-parameterized
linear models \cite{surrogate-design,BLLT19_TR}.

\subsection{Main result}
Despite its prevalence in the literature, the 
expected residual projection is not well understood, even in such
simple cases as when $\S$ is a Gaussian sketch (i.e., with
i.i.d.~standard normal entries). We address this by providing a
surrogate expression, i.e., a simple analytically
tractable approximation, for this matrix quantity:
\begin{align}
  \E[\P_{\!\perp}]\ \overset\epsilon\simeq \ \bar\P_{\!\perp}:=(\gamma\A^\top\A +
  \I)^{-1},\quad\text{with \ $\gamma>0$ \ s.t. \ }\tr\,\bar\P_{\!\perp} = n-k.\label{eq:surrogate}
\end{align}
Here, $\overset{\epsilon}{\simeq}$ means that while the surrogate expression is
not exact, it approximates the true quantity up to some $\epsilon$
accuracy. Our main result provides a rigorous approximation guarantee
for this surrogate expression with respect to a
range of sketching matrices $\S$, including the standard Gaussian and
Rademacher sketches. We state the result using the
positive semi-definite ordering denoted by $\preceq$. 
\begin{theorem}\label{t:main}
Let $\S$ be a sketch of size $k$ with i.i.d.~mean-zero sub-gaussian entries and let
  $r=\|\A\|_F^2/\|\A\|^2$ be the stable rank of $\A$.  If
  we let $\rho = r/k$ be a fixed constant larger than $1$, then
  \begin{align*}
    (1-\epsilon)\,\bar\P_{\!\perp}\preceq\E[\P_{\!\perp}]\preceq
    (1+\epsilon)\,\bar\P_{\!\perp}\quad\text{for}\quad \epsilon =
    O(\tfrac1{\sqrt r}).
  \end{align*}
\end{theorem}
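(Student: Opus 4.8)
The plan is to pass from the projection to a resolvent, apply a deterministic-equivalent calculation for a sum of i.i.d.\ rank-one terms, and then send a regularization parameter to zero. First I would write the residual projection as the zero-ridge limit of a resolvent: since $\P_{\!\perp}$ is the orthogonal projection onto $\ker(\A^\top\S^\top\S\A)$, we have
\begin{align*}
  \P_{\!\perp}\;=\;\lim_{\lambda\downarrow 0}\ \lambda\,\Q(\lambda),\qquad \Q(\lambda):=\Big(\textstyle\sum_{i=1}^k\x_i\x_i^\top+\lambda\I\Big)^{-1},\quad \x_i:=\A^\top\s_i,
\end{align*}
where $\s_1,\dots,\s_k\in\R^m$ are the i.i.d.\ rows of $\S$. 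The vectors $\x_i$ are then i.i.d., mean-zero, sub-gaussian, with covariance $\E[\x_i\x_i^\top]=\A^\top\E[\s_i\s_i^\top]\A=\Sigmab$, where $\Sigmab:=\A^\top\A$. Thus $\sum_i\x_i\x_i^\top$ is a (scaled) sample covariance matrix and $\Q(\lambda)$ is exactly the object for which sharp deterministic equivalents are available. This representation also bypasses any full-rank concern about $\S\A$, since $\lambda\Q(\lambda)$ is well defined for all $\lambda>0$.

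Second, I would establish a deterministic equivalent $\E[\Q(\lambda)]\approx\bar\Q(\lambda):=(\theta\,\Sigmab+\lambda\I)^{-1}$ by the leave-one-out/Sherman--Morrison scheme. Writing $\Q_{-i}$ for the resolvent with the $i$-th term removed and $\tau_i=\x_i^\top\Q_{-i}\x_i$, the identity $\I-\lambda\Q=\sum_i\frac{\x_i\x_i^\top\Q_{-i}}{1+\tau_i}$, combined with the independence of $\x_i$ and $\Q_{-i}$ and the concentration $\tau_i\approx\tr(\Sigmab\,\Q_{-i})\approx\delta$, forces the self-consistent relation $\I-\lambda\bar\Q=\frac{k}{1+\delta}\Sigmab\bar\Q$, i.e.\ $\theta=k/(1+\delta)$ with $\delta=\tr(\Sigmab\bar\Q)$. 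The crucial probabilistic input is the fluctuation of the quadratic forms $\x_i^\top\Q_{-i}\x_i$ around $\tr(\Sigmab\Q_{-i})$; by Hanson--Wright (applied to the sub-gaussian $\s_i$, using $\|\A\Q_{-i}\A^\top\|_F=\|\Sigmab^{1/2}\Q_{-i}\Sigmab^{1/2}\|_F$) this fluctuation is controlled, relative to its mean, by a ratio of the form $\|\Sigmab^{1/2}\bar\Q\,\Sigmab^{1/2}\|_F\big/\tr(\Sigmab\bar\Q)$, which in the regime $\rho=r/k>1$ is governed by the stable rank $r$ and produces the $O(1/\sqrt r)$ rate.

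Third, I would take $\lambda\downarrow 0$. One checks that $\theta/\lambda\to\gamma$ with $\gamma$ finite and positive, so that $\lambda\bar\Q(\lambda)=(\tfrac{\theta}{\lambda}\Sigmab+\I)^{-1}\to(\gamma\Sigmab+\I)^{-1}$. Feeding this scaling into $\delta=\tr(\Sigmab\bar\Q)$ together with $\theta=k/(1+\delta)$ yields precisely $\tr\big((\gamma\Sigmab+\I)^{-1}\big)=n-k$, which is the normalization defining $\bar\P_{\!\perp}$ in \eqref{eq:surrogate}; hence $\lim_{\lambda\downarrow0}\lambda\bar\Q=\bar\P_{\!\perp}$. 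Converting the relative $O(1/\sqrt r)$ error on $\E[\Q]$ into the two-sided statement then gives $(1-\epsilon)\bar\P_{\!\perp}\preceq\E[\P_{\!\perp}]\preceq(1+\epsilon)\bar\P_{\!\perp}$.

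The hard part is twofold. First, upgrading the scalar/trace concentration into a genuine two-sided PSD bound on $\E[\Q]-\bar\Q$ with the sharp relative error, rather than a mere operator-norm estimate, which requires tracking the error as a matrix and exploiting the stability (monotonicity and Lipschitz dependence) of the fixed point $\delta\mapsto\tr(\Sigmab(\tfrac{k}{1+\delta}\Sigmab+\lambda\I)^{-1})$. Second, ensuring the matrix error bound is uniform in $\lambda$ near $0$ so that it survives the degenerate limit where $\Q(\lambda)$ itself blows up; the saving grace is that $\lambda\Q(\lambda)$ and $\lambda\bar\Q(\lambda)$ stay bounded, so the relative error is the correct quantity to propagate through the limit.
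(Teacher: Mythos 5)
Your route (zero-ridge resolvent $\Q(\lambda)$, leave-one-out deterministic equivalent, then $\lambda\downarrow 0$) is genuinely different from the paper's, and your fixed-point calculus is right: the self-consistent pair $\theta=k/(1+\delta)$, $\delta=\tr(\Sigmab\bar\Q)$ does converge, after the rescaling $\theta/\lambda\to\gamma$, to exactly the normalization $\tr\,(\gamma\Sigmab+\I)^{-1}=n-k$. But there is a genuine gap at the step you yourself flag as the ``hard part,'' and it is not a technicality that standard tools close. All known deterministic-equivalent error bounds for $\E[\Q(\lambda)]-\bar\Q(\lambda)$ carry constants that blow up polynomially in $1/\lambda$, because in the regime $k<r$ the sample covariance $\sum_i\x_i\x_i^\top$ has a kernel of dimension at least $n-k$ on which $\|\Q(\lambda)\|=1/\lambda$. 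Your proposal asserts that ``the relative error is the correct quantity to propagate through the limit,'' but no mechanism is given for why the two-sided PSD bound $(1-\epsilon)\bar\Q\preceq\E[\Q]\preceq(1+\epsilon)\bar\Q$ holds with $\epsilon$ \emph{uniform} in $\lambda$ near $0$; this uniformity is precisely the theorem, not a consequence of the $\lambda>0$ theory. Indeed the paper explicitly contrasts itself with this resolvent-plus-limit strategy (used asymptotically in prior work via an exchange-of-limits argument) and states that it does not yield finite-sample control; the paper's fix is to work directly at $\lambda=0$ with the Moore--Penrose rank-one update (its Lemma 1), which is exactly the $\lambda\downarrow0$ limit of your Sherman--Morrison identity, thereby making the uniformity question vanish.

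The second missing piece is inside your chain $\tau_i\approx\tr(\Sigmab\Q_{-i})\approx\delta$. Hanson--Wright handles only the first approximation (fluctuation of $\x_i$ conditioned on $\Q_{-i}$). The second approximation has two separate error sources: the fluctuation of the random quantity $\tr(\Sigmab\Q_{-i})$ around its expectation, and the gap between that expectation and the fixed point $\delta$. In the paper these consume most of the proof: the former requires a martingale difference (Burkholder/Azuma--Hoeffding) argument for $\tr\,\Sigmab(\P_{-k}-\E[\P_{-k}])$, giving the $O(\sqrt k)$ fluctuation, and the latter requires a self-consistent bootstrap in which the quantity $|\bar s-s|$ is shown to satisfy an inequality of the form $|\bar s-s|\le\frac{\gamma}{\gamma+1}\bigl(|\bar s-s|+C\sqrt{|\bar s-s|}+C\sqrt{2r}+1\bigr)+1$, which is then solved to get $|\bar s-s|=O(\sqrt r)$. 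Your appeal to ``monotonicity and Lipschitz dependence of the fixed point'' addresses neither: fixed-point stability relates two deterministic equations to each other, but it cannot substitute for concentration of the random trace, and the Lipschitz constant of the map you write degenerates exactly as $\lambda\downarrow0$. To repair the proposal you would either have to prove a new, $\lambda$-uniform relative deterministic equivalent (essentially re-deriving the paper's estimates at $\lambda=0$ and then showing they persist for small $\lambda$), or simply drop the resolvent scaffolding and run your leave-one-out scheme directly on the projection, which is what the paper does.
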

In other words, when the sketch size $k$ is smaller than the stable rank $r$
of $\A$, then the discrepancy between our surrogate expression
$\bar\P_{\!\perp}$ and $\E[\P_{\!\perp}]$ is of the order
$1/\sqrt r$, where the big-O notation hides only the dependence
on $\rho$ and on the sub-gaussian constant (see Theorem \ref{t:main-tech} for
more details). Our proof of Theorem \ref{t:main} is inspired by the techniques from
random matrix theory which have been used to analyze the asymptotic
spectral distribution of large random matrices by focusing on the
associated matrix resolvents and Stieltjes transforms
\cite{hachem2007deterministic,bai2010spectral}. However, our analysis
is novel in several respects:
\begin{enumerate}
  \item The residual projection matrix can be obtained
    from the appropriately scaled resolvent matrix $z(\A^\top\S^\top\S\A+z\I)^{-1}$ by
    taking $z\rightarrow 0$. Prior work (e.g.,
      \cite{HMRT19_TR}) combined this with an exchange-of-limits argument to
      analyze the asymptotic behavior of the residual projection. 
      This approach, however, does not allow for a precise control in
      finite-dimensional problems. We are able to provide a more
      fine-grained, non-asymptotic analysis by working directly with
      the residual projection itself, instead of the resolvent.
  \item We require no assumptions on the largest and smallest singular
    value of $\A$. Instead, we derive our bounds in terms of the
    stable rank of $\A$ (as opposed to its actual rank), which
    implicitly compensates for ill-conditioned data matrices.
  \item We obtain upper/lower bounds for $\E[\P_{\!\perp}]$ in terms
    of the positive semi-definite ordering~$\preceq$, which can be
    directly converted to guarantees for the precise
    expressions of expected low-rank approximation error derived in
    the following section. 
  \end{enumerate}

It is worth mentioning that the proposed analysis is significantly
different from the sketching literature based on subspace
embeddings (e.g.,
\cite{sarlos-sketching,cw-sparse,nn-sparse,projection-cost-preserving,optimal-matrix-product}),
in the sense that here our object of interest is not to obtain
a worst-case approximation with high probability, but rather, our
analysis provides \emph{precise} characterization on the
\emph{expected} residual projection matrix that goes \emph{beyond
  worst-case bounds}. From an application perspective, the subspace
embedding property is neither sufficient nor necessary for many
numerical implementations of sketching \cite{blendenpik,lsrn}, 
or statistical results \cite{GarveshMahoney_JMLR,dobriban2019asymptotics,yang2020reduce}, as well as in the context of iterative optimization and implicit regularization (see Sections~\ref{s:newton}~and~\ref{s:implicit} below), which are discussed in detail as concrete applications of the proposed analysis.

\subsection{Low-rank approximation}
\label{s:low-rank}
We next provide some immediate corollaries of Theorem~\ref{t:main},
where we use $x\overset\epsilon\simeq y$ to denote a 
multiplicative approximation $|x-y|\leq \epsilon y$.
Note that our analysis is new even for the classical Gaussian sketch where
the entries of $\S$ are i.i.d.~standard normal. However the results
apply more broadly, including a standard class of data-base friendly
Rademacher sketches where each entry $s_{ij}$ is a $\pm1$ Rademacher
random variable \cite{achlioptas2003database}. We start by analyzing the Frobenius
norm error $\|\A-\A\P\|_F^2=\tr\,\A^\top\A\,\P_{\!\perp}$ of
sketched low-rank approximations. Note that by the definition of $\gamma$ in
\eqref{eq:surrogate}, we have $k =
\tr\,(\I-\bar\P_{\!\perp})=\tr\,\gamma\A^\top\A(\gamma\A^\top\A+\I)^{-1}$,
so the surrogate expression we obtain for the expected error is
remarkably simple.
\begin{corollary} %[low-rank approximation]
  \label{c:low-rank}
Let $\sigma_i$ be the singular values of $\A$. Under the assumptions of Theorem \ref{t:main}, we have:\vspace{-1mm}
  \begin{align*}
    \E\big[\|\A-\A\P\|_F^2\big] \ \overset\epsilon\simeq\ 
    k/\gamma
\quad \text{for \ $\gamma>0$ \ s.t. \ } \sum_{i}\frac{\gamma\sigma_i^2}{\gamma\sigma_i^2+1} = k.
  \end{align*}
\end{corollary}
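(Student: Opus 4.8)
The plan is to treat this as a direct consequence of Theorem~\ref{t:main}, combined with the exact identity $\E[\|\A-\A\P\|_F^2]=\tr\,\A^\top\A\,\E[\P_{\!\perp}]$ recorded just above the corollary. The only genuinely new ingredients are to convert the positive semi-definite sandwiching of $\E[\P_{\!\perp}]$ into a scalar multiplicative approximation, and then to evaluate the surrogate trace in closed form.

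First I would record the elementary trace-monotonicity fact: if $\C\preceq\D$ and $\B\succeq0$, then $\tr\,\B\C\le\tr\,\B\D$, since $\B(\D-\C)$ is a product of two positive semi-definite matrices and hence has nonnegative trace. Applying this with the fixed weight $\B=\A^\top\A\succeq0$ to the two inequalities of Theorem~\ref{t:main} gives
\[
(1-\epsilon)\,\tr\,\A^\top\A\,\bar\P_{\!\perp}\ \le\ \tr\,\A^\top\A\,\E[\P_{\!\perp}]\ \le\ (1+\epsilon)\,\tr\,\A^\top\A\,\bar\P_{\!\perp},
\]
that is, $\E[\|\A-\A\P\|_F^2]\overset\epsilon\simeq\tr\,\A^\top\A\,\bar\P_{\!\perp}$ with exactly the same $\epsilon=O(1/\sqrt r)$, since taking the trace against the fixed weight $\A^\top\A$ leaves the multiplicative constants $1\pm\epsilon$ untouched.

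It then remains to show that the surrogate quantity equals $k/\gamma$. Because $\bar\P_{\!\perp}=(\gamma\A^\top\A+\I)^{-1}$ is a function of $\A^\top\A$, the two matrices commute and share an eigenbasis; diagonalizing with eigenvalues $\sigma_i^2$ yields
\[
\tr\,\A^\top\A\,\bar\P_{\!\perp}=\sum_i\frac{\sigma_i^2}{\gamma\sigma_i^2+1}=\frac1\gamma\sum_i\frac{\gamma\sigma_i^2}{\gamma\sigma_i^2+1}.
\]
Here I would first check that the corollary's normalization $\sum_i\frac{\gamma\sigma_i^2}{\gamma\sigma_i^2+1}=k$ picks out the same $\gamma$ as in \eqref{eq:surrogate}: using $\frac{\gamma\sigma_i^2}{\gamma\sigma_i^2+1}=1-\frac{1}{\gamma\sigma_i^2+1}$ and summing over all $n$ eigenvalues of $\A^\top\A$ shows this condition is equivalent to $\tr\,\bar\P_{\!\perp}=\sum_i\frac{1}{\gamma\sigma_i^2+1}=n-k$, which is precisely the equation fixing $\gamma$ in \eqref{eq:surrogate}. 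Substituting $\sum_i\frac{\gamma\sigma_i^2}{\gamma\sigma_i^2+1}=k$ into the display then gives $\tr\,\A^\top\A\,\bar\P_{\!\perp}=k/\gamma$, and chaining this with the sandwich above yields $\E[\|\A-\A\P\|_F^2]\overset\epsilon\simeq k/\gamma$.

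There is no real obstacle in this argument: the heavy lifting is done entirely by Theorem~\ref{t:main}, and the corollary reduces to trace-monotonicity plus the algebraic simplification of the surrogate. The one point deserving a moment of care is the consistency of the two characterizations of $\gamma$ (the trace-normalization of \eqref{eq:surrogate} versus the defining sum in the corollary statement), which I verified above; everything else is routine.
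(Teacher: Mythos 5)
Your proposal is correct and is essentially the paper's own argument: the paper proves this corollary by combining the identity $\E[\|\A-\A\P\|_F^2]=\tr\,\A^\top\A\,\E[\P_{\!\perp}]$ with the positive semi-definite sandwich of Theorem~\ref{t:main} (which it notes can be ``directly converted'' to trace guarantees), and by observing that the normalization $\tr\,\bar\P_{\!\perp}=n-k$ gives $\tr\,\A^\top\A\,\bar\P_{\!\perp}=k/\gamma$. Your added verifications --- the trace-monotonicity step and the equivalence of the two characterizations of $\gamma$ --- are exactly the routine details the paper leaves implicit.
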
\vspace{-4mm}
\begin{remark}
  The parameter $\gamma=\gamma(k)$ increases at least linearly
  as a function of $k$, which is why the expected error will always
  decrease with increasing $k$. For example, when the singular values
  of $\A$ exhibit exponential decay, i.e., $\sigma_i^2=C\cdot\alpha^{i-1}$ for $\alpha\in(0,1)$, then the error also decreases
  exponentially, at the rate of $k/(\alpha^{-k}-1)$.
We discuss this further in Section \ref{s:explicit}, giving explicit
formulas for the error as a function of $k$ under both exponential and polynomial spectral
decay profiles.
\end{remark}
The above result is important for many RandNLA methods, and it is also relevant in the context of kernel methods,
where the data is represented via a positive semi-definite $m\times m$
kernel matrix $\K$ which corresponds to the matrix of dot-products of
the data vectors in some reproducible kernel Hilbert space. In this
context, sketching can be applied directly to the matrix $\K$ via an
extended variant of the
Nystr\"om method \cite{revisiting-nystrom}. A Nystr\"om approximation
constructed from a sketching matrix $\S$ is defined as $\tilde\K =
\C^\top\W^\dagger\C$, where $\C=\S\K$ and $\W=\S\K\S^\top$, and it is
applicable to a variety of settings, including Gaussian Process
regression, kernel machines and Independent
Component Analysis \cite{sparse-variational-gp,Williams01Nystrom,Bach2003}. By setting
$\A=\K^{\frac12}$, it is easy to see \cite{nystrom-multiple-descent} that
the trace norm error $\|\K-\tilde\K\|_*$ is identical to the squared
Frobenius norm error of the low-rank 
sketch $\S\A$, so Corollary \ref{c:low-rank} implies that
\begin{align}
  \E\big[\|\K-\tilde\K\|_*\big] \ \overset\epsilon\simeq\  k/\gamma \quad \text{for \ $\gamma>0$ \ s.t. \ } \sum_{i}\frac{\gamma\lambda_i}{\gamma\lambda_i+1} = k,\label{eq:nystrom}
\end{align}
with any sub-gaussian sketch, where $\lambda_i$ denote the
eigenvalues of $\K$. Our error analysis 
given in Section \ref{s:explicit} is particularly relevant here, since
commonly used kernels such as the Radial 
Basis Function (RBF) or the Mat\'ern kernel induce a well-understood eigenvalue
decay \cite{Santa97Gaussianregression,RasmussenWilliams06}.

Metrics other than the aforementioned Frobenius norm error,
such as the spectral norm error \cite{tropp2011structure}, are also of significant
interest in the low-rank approximation literature. We leave these
directions for future investigation.

\subsection{Randomized iterative optimization}
\label{s:newton}
We next turn to a class of iterative methods which take
advantage of sketching to reduce the per iteration cost of
optimization. These methods have been developed in a variety of
settings, from solving linear systems to convex optimization and
empirical risk minimization, and in many cases
the residual projection matrix appears as a black box quantity whose spectral
properties determine the convergence behavior of the algorithms
\cite{generalized-kaczmarz}. With
our new results, we can precisely characterize not only the rate of
convergence, but also, in some cases, the complete evolution of
the parameter vector, for the following algorithms:
% The following algorithms
% benefit from our improved convergence analysis:
\begin{enumerate}
  \item \emph{Generalized Kaczmarz method}
    \cite{generalized-kaczmarz} for approximately solving a linear system $\A\x=\b$;
  \item \emph{Randomized Subspace Newton} \cite{Gower2019}, a second order
    method, where we sketch the Hessian matrix.
  \item \emph{Jacobian Sketching} \cite{jacsketch}, a
    class of first order methods which use additional information via a
    weight matrix $\W$ that is sketched at every iteration.
\end{enumerate}
 We believe that
extensions of our techniques will apply to other algorithms,
such as that of \cite{lacotte2019high}. 

We next 
%Next, we 
give a result in the context of linear systems
for the generalized Kaczmarz method \cite{generalized-kaczmarz}, but a similar
convergence analysis is given for the methods of
\cite{Gower2019,jacsketch} in Appendix~\ref{a:newton}.
\begin{corollary}\label{c:kaczmarz}
Let $\x^*$ be the unique solution of $\A\x^*=\b$ and consider
  the iterative algorithm:
  \begin{align*}
    \x^{t+1} = \argmin_\x\|\x-\x^t\|^2\quad\textnormal{subject to}\quad\S\A\x=\S\b.
  \end{align*}
Under the assumptions of Theorem \ref{t:main}, with $\gamma$ defined in
\eqref{eq:surrogate} and $r=\|\A\|_F^2/\|\A||^2$, we have:
  \begin{align*}
    \E\big[\x^{t+1}-\x^*\big] \overset\epsilon\simeq 
    (\gamma\A^\top\A+\I)^{-1}\,\E\big[\x^t-\x^*\big]
    \quad\text{for}\quad\epsilon=O(\tfrac1{\sqrt r}). 
  \end{align*}
\end{corollary}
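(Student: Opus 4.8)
The plan is to reduce the corollary to a one-step linear recursion in the error vector $\x^t-\x^*$ and then invoke Theorem~\ref{t:main} as a black box. First I would write the Kaczmarz update in closed form. The constraint set $\{\x:\S\A\x=\S\b\}$ is an affine subspace, and $\x^{t+1}$ is by definition the Euclidean projection of $\x^t$ onto it, so the standard (pseudoinverse) projection formula for a consistent system gives
\[
  \x^{t+1}=\x^t-(\S\A)^\dagger(\S\A\,\x^t-\S\b).
\]
Because $\x^*$ solves $\A\x^*=\b$ exactly, it is feasible for every realization of the sketch, i.e.\ $\S\b=\S\A\x^*$. Substituting this and subtracting $\x^*$ from both sides collapses the update to
\[
  \x^{t+1}-\x^*=\big(\I-(\S\A)^\dagger\S\A\big)(\x^t-\x^*)=\P_{\!\perp}(\x^t-\x^*),
\]
where I recognize $(\S\A)^\dagger\S\A=\P$ as the orthogonal projection onto the row space of $\S\A$, and hence $\I-\P=\P_{\!\perp}$. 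This is the crux: the per-step error is governed exactly by the residual projection matrix.

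Next I would take expectations. The sketch used at step $t+1$ is drawn fresh and is therefore independent of the history $\x^1,\dots,\x^t$; consequently $\P_{\!\perp}$ is independent of $\x^t-\x^*$, and the expectation factors as
\[
  \E[\x^{t+1}-\x^*]=\E[\P_{\!\perp}]\,\E[\x^t-\x^*].
\]
At this point the problem is entirely reduced to approximating $\E[\P_{\!\perp}]$, which is precisely what Theorem~\ref{t:main} supplies: $(1-\epsilon)\,\bar\P_{\!\perp}\preceq\E[\P_{\!\perp}]\preceq(1+\epsilon)\,\bar\P_{\!\perp}$ with $\bar\P_{\!\perp}=(\gamma\A^\top\A+\I)^{-1}$ and $\epsilon=O(1/\sqrt r)$.

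The last step is to translate this two-sided semidefinite bound on the matrix into the claimed $\overset\epsilon\simeq$ statement for the vector. Writing $\Delta:=\E[\P_{\!\perp}]-\bar\P_{\!\perp}$, Theorem~\ref{t:main} gives $-\epsilon\,\bar\P_{\!\perp}\preceq\Delta\preceq\epsilon\,\bar\P_{\!\perp}$; since $\Delta$ is symmetric this yields the operator-norm bound $\|\Delta\|\le\epsilon\,\|\bar\P_{\!\perp}\|\le\epsilon$, using $\bar\P_{\!\perp}=(\gamma\A^\top\A+\I)^{-1}\preceq\I$. Applying this to the fixed vector $\E[\x^t-\x^*]$ then controls the discrepancy $\|\E[\x^{t+1}-\x^*]-\bar\P_{\!\perp}\,\E[\x^t-\x^*]\|\le\epsilon\,\|\E[\x^t-\x^*]\|$, which is the stated approximation with $\bar\P_{\!\perp}=(\gamma\A^\top\A+\I)^{-1}$.

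I expect the main obstacle to be notational rather than mathematical: steps one through three are exact identities, and all of the analytic difficulty has already been absorbed into Theorem~\ref{t:main}. The two points that need care are the independence argument, which relies on using an independent sketch at each iteration, and the precise meaning assigned to $\overset\epsilon\simeq$ once the semidefinite guarantee is pushed through a matrix--vector product; one must be mindful that $\Delta$ is not itself semidefinite, so the resulting error is naturally relative to $\|\E[\x^t-\x^*]\|$ rather than read off componentwise.
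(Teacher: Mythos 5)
Your reduction is the same one the paper uses: the paper simply cites Theorem 4.1 of \cite{generalized-kaczmarz} for the exact identity $\E[\x^{t+1}-\x^*]=(\I-\E[\P])\,\E[\x^t-\x^*]$, which you instead derive from scratch (projection onto the affine constraint set, feasibility of $\x^*$ under every realization of the sketch, and a fresh sketch at each iteration — note the corollary writes a single $\S$, but the appendix version with refreshed $\S_t$ confirms your reading). That part is correct, and all the analytic content is indeed absorbed into Theorem~\ref{t:main}.

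There is, however, a genuine (if small) gap in your final step, precisely at the point you flag yourself. The paper's notation $x\overset\epsilon\simeq y$ means $|x-y|\le\epsilon y$, i.e.\ error relative to the \emph{surrogate} quantity $y$; in the vector form made precise in Corollary~\ref{c:kaczmarz2}, the claim \eqref{eq:trajectory} is $\|\E[\Delta_{t+1}]-\bar\Delta_{t+1}\|\le\epsilon\,\|\bar\Delta_{t+1}\|$ with $\bar\Delta_{t+1}=(\gamma\A^\top\A+\I)^{-1}\E[\Delta_t]$. Your operator-norm argument instead yields error at most $\epsilon\,\|\E[\Delta_t]\|$, which is a weaker statement because $\bar\P_{\!\perp}\preceq\I$ makes $\|\bar\Delta_{t+1}\|$ potentially smaller than $\|\E[\Delta_t]\|$; without further information the two normalizations are not interchangeable. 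The missing ingredient is a \emph{lower} bound on $\bar\P_{\!\perp}$, which the paper establishes in \eqref{eq:bound-bar-gamma-P}: under the standing assumption $\rho=r/k>1$ one has $\gamma\le\frac1{\rho-1}$ and hence $\bar\P_{\!\perp}\succeq\frac{\rho-1}{\rho}\I$. With that fact your bound converts in one line, since $\|\E[\Delta_t]\|\le\frac{\rho}{\rho-1}\|\bar\P_{\!\perp}\E[\Delta_t]\|$, and the constant $\frac{\rho}{\rho-1}$ is absorbed into $O(\tfrac1{\sqrt r})$ because $\rho$ is treated as fixed. The paper's own proof sidesteps the issue by conjugating rather than taking plain operator norms: it bounds $\|\bar\P_{\!\perp}^{-\frac12}(\E[\P_{\!\perp}]-\bar\P_{\!\perp})\bar\P_{\!\perp}^{-\frac12}\|\le\epsilon$ and deduces $\E[\Delta_t]^\top(\E[\P_{\!\perp}]-\bar\P_{\!\perp})^2\E[\Delta_t]\le\frac{\rho}{\rho-1}\,\epsilon^2\,\E[\Delta_t]^\top\bar\P_{\!\perp}^2\E[\Delta_t]$, landing directly on the correct normalization — but it relies on the very same lower bound on $\bar\P_{\!\perp}$ that your write-up never invokes.
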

The corollary follows from Theorem \ref{t:main} combined with Theorem
4.1 in \cite{generalized-kaczmarz}. 
Note that when $\A^\top\A$ is positive definite then
$(\gamma\A^\top\A+\I)^{-1}\prec\I$, so the algorithm will
converge from any starting point, and the worst-case convergence rate of the above method can be
obtained by evaluating the largest eigenvalue of
$(\gamma\A^\top\A+\I)^{-1}$. However the result itself is much
stronger, in that it can be used to describe the (expected) trajectory of the
iterates for any starting point $\x^0$. Moreover, when the spectral decay
profile of $\A$ is known, then the explicit expressions for $\gamma$
as a function of $k$ derived in Section \ref{s:explicit} can be used
to characterize the convergence properties of generalized Kaczmarz as
well as other methods discussed above.

\subsection{Implicit regularization}\label{s:implicit}
%\paragraph{Implicit regularization.}

Setting $\x^t=\zero$, we can view one step of the iterative method in Corollary~\ref{c:kaczmarz} as
  finding a minimum norm interpolating solution of an under-determined linear
  system $(\S\A,\S\b)$. Recent interest in the generalization
capacity of over-parameterized machine learning models has
  motivated extensive research on the statistical properties of such interpolating
  solutions, e.g., \cite{BLLT19_TR,HMRT19_TR,surrogate-design}. In this
  context, Theorem \ref{t:main} provides new evidence for the implicit
  regularization conjecture posed by \cite{surrogate-design} (see
  their Theorem 2 and associated discussion), with the amount of
  regularization equal $\frac1\gamma$, where $\gamma$ is implicitly
  defined in \eqref{eq:surrogate}:
  \begin{align*}
  \underbrace{\E\Big[\argmin_\x\|\x\|^2\ \ \textnormal{s.t.}\ \
    \S\A\x=\S\b\Big] - \x^*}_{\text{Bias of sketched minimum norm solution}}
    \  \ \overset\epsilon\simeq\ \
    \underbrace{\argmin_\x\Big\{\|\A\x-\b\|^2+\tfrac1\gamma\|\x\|^2\Big\}
    - \x^*}_{\text{Bias of $l_2$-regularized solution}}.
  \end{align*}
While implicit regularization has received attention recently in the context of SGD algorithms for overparameterized machine learning models, it was originally discussed in the context of approximation algorithms more generally~\cite{Mah12}.
Recent work has made precise this notion in the context of RandNLA \cite{surrogate-design}, and our results here can be viewed in terms of implicit regularization of scalable RandNLA methods.

\vspace{-1mm}
\subsection{Related work}

A significant body of research has been dedicated to understanding the
guarantees for low-rank approximation via sketching,
particularly in the context of 
RandNLA \cite{DM16_CACM,RandNLA_PCMIchapter_chapter}. This line of work includes
i.i.d.~row sampling methods \cite{BoutsidisMD08,ridge-leverage-scores} which 
preserve the structure of the data, and data-oblivious methods such as
Gaussian and Rademacher sketches \cite{Mah-mat-rev_JRNL,tropp2011structure,woodruff2014sketching}. However,
all of these results focus on worst-case 
upper bounds on the approximation error. One exception is a recent
line of works on non-i.i.d.~row sampling with Determinantal Point Processes
(DPP, \cite{dpps-in-randnla}). In this case, exact analysis of the low-rank approximation
error \cite{nystrom-multiple-descent}, as well as precise convergence analysis
of stochastic second order methods \cite{randomized-newton}, have been
obtained. Remarkably, the expressions they obtain are analogous to
\eqref{eq:surrogate}, despite using completely different techniques. However, their analysis
is limited only to DPP-based sketches, which are considerably more
expensive to construct and thus much less widely used. The connection
between DPPs and Gaussian sketches was recently explored by 
\cite{surrogate-design} in the context of analyzing the implicit
regularization effect of choosing a minimum norm solution in
under-determined linear regression. They conjectured that the
expectation formulas obtained for DPPs are a good proxy for the
corresponding quantities obtained under a Gaussian
distribution. Similar observations were made by
\cite{debiasing-second-order} in the context of sketching for
regularized least squares and second order optimization. While
both of these works only provide empirical evidence for this
particular claim, our Theorem \ref{t:main} can be 
viewed as the first theoretical non-asymptotic justification of that
conjecture.

The effectiveness of sketching has also been extensively studied in
the context of second order optimization. These methods differ
depending on how the sketch is applied to the Hessian matrix, and
whether or not it is applied to the gradient as well. The class of
methods discussed in Section \ref{s:newton}, including Randomized
Subspace Newton and the Generalized Kaczmarz method, relies on
projecting the Hessian downto a low-dimensional subspace, which makes our
results directly applicable. A related family of methods uses
the so-called Iterative Hessian Sketch (IHS) approach \cite{pilanci2016iterative,lacotte2019faster}. The
similarities between IHS and the Subspace Newton-type methods (see
\cite{Qu2015Feb} for a comparison) suggest that our techniques could be
extended to provide precise convergence guarantees also to the IHS.
Finally, yet another family of Hessian sketching methods has been studied by
\cite{roosta2019sub,sketched-ridge-regression,XRM17_theory_TR,YXRM18_TR,fred_newtonMR_TR,distributed-newton,determinantal-averaging}.
These methods  preserve the  rank of the 
 Hessian, and so their convergence
guarantees do not rely on the residual projection.

\section{Precise analysis of the residual projection}

  In this section, we give a detailed statement of our main technical
result, along with a sketch of the proof. First, recall the definition of
sub-gaussian random variables. We say that $x$ is a
$K$-sub-gaussian random variable if its sub-gaussian Orlicz norm is
bounded by $K$, i.e., $\| x
\|_{\psi_2} \le K$, where $\| x \|_{\psi_2} := \inf\{ t > 0:~\E[
\exp(x^2/t^2) ] \le 2 \}$.

For the sake of generality, we state the main result in a slightly different form
than Theorem~\ref{t:main}, which is potentially of independent
interest to random matrix theory and high-dimensional
statistics. Namely, we replace the $m\times n$ matrix 
$\A$ with a  positive semi-definite $n\times n$ matrix
$\Sigmab^{\frac12}$. Furthermore, instead of a sketch $\S$ with
i.i.d.~sub-gaussian entries, we use a random matrix $\Z$ with
i.i.d.~isotropic rows, so that the random matrix
$\X=\Z\Sigmab^{\frac12}$ (which replaced the sketch $\S\A$) represents
random row samples from an $n$-variate distribution
with covariance $\Sigmab$. We do not require the rows of $\Z$ to have
independent entries, but rather, that they satisfy a sub-gaussian concentration
property known as the Hanson-Wright inequality.
\begin{definition}\label{d:hanson-wright}
A random $n$-dimensional vector $\x$ satisfies the Hanson-Wright
inequality with constant $K$ if:
\vspace{-2mm}
\begin{align*}
    \Pr\big\{|\x^\top\B\x-\tr(\B)|\geq t\big\}\leq
  2\exp\bigg(-\min\Big\{\frac{t^2}{K^4\|\B\|_F^2},\frac{t}{K^2\|\B\|}\Big\}\bigg)
  \qquad\text{for any $n\times n$ matrix $\B$.}
\end{align*}
\end{definition}
Any isotropic random vector with independent mean zero $K$-sub-gaussian entries
satisfies the Hanson-Wright inequality with constant $O(K)$
\cite{rudelson2013hanson}, but the inequality can also be satisfied by
vectors with dependent entries, which is why it is a strictly weaker condition.
In Section \ref{s:reduction} we show how to 
convert this more general setup from $\Z$ and $\Sigmab^{\frac12}$ back to the
statement with $\S$ and $\A$ given in Theorem~\ref{t:main}.

\begin{theorem}\label{t:main-tech}
Let $\P_\perp = \I - \X^\dagger \X$ for $\X = \Z\Sigmab^{\frac12}$,
where $\Z \in \mathbb R^{k\times n}$ has i.i.d.~rows with zero mean
and identity covariance that satisfy the Hanson-Wright inequality with
constant $K$, and
$\Sigmab$ is an $n\times n$ positive semi-definite matrix. Define:
\ifisarxiv\vspace{-1.25mm}\fi
\begin{align*}
\bar\P_\perp= (\gamma\Sigmab + \I)^{-1},\quad\text{such that}\quad\tr\,\bar\P_\perp=n-k.
\end{align*}
\ifisarxiv\vspace{-4.75mm}

\noindent
\fi
Let $r =\tr(\Sigmab)/\|\Sigmab\|$
be the stable rank of
$\Sigmab^{\frac12}$ and fix $\rho=r/k > 1$. There exists a constant
$C_{\rho}>0$, depending only on $\rho$ and $K$, such that if $r\geq
C_\rho$, then
\ifisarxiv\vspace{-1.5mm}\fi
\begin{align}
\Big(1-\frac {C_\rho}{\sqrt
  r}\Big)\cdot\bar\P_\perp\preceq\E[\P_\perp]\preceq
  \Big(1+\frac {C_\rho}{\sqrt r}\Big)\cdot\bar\P_\perp.
\end{align}
\end{theorem}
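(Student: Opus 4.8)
The plan is to derive an approximate self-consistent (fixed-point) equation satisfied by $\E[\P_\perp]$ and then invoke stability of that equation to conclude closeness to the surrogate $\bar\P_\perp=(\gamma\Sigmab+\I)^{-1}$. Writing the rows of $\X=\Z\Sigmab^{\frac12}$ as $\x_i=\Sigmab^{\frac12}\z_i$, the starting point is the elementary rank-one update formula for orthogonal projections: if $\P_\perp^{(-i)}$ denotes the residual projection built from all rows except the $i$-th, then
\[
\P_\perp=\P_\perp^{(-i)}-\frac{\P_\perp^{(-i)}\x_i\x_i^\top\P_\perp^{(-i)}}{\x_i^\top\P_\perp^{(-i)}\x_i},
\]
and crucially $\P_\perp^{(-i)}$ is independent of $\x_i$, since $\Z$ has independent rows. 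One could instead read this off the resolvent $z(\X^\top\X+z\I)^{-1}\to\P_\perp$ as $z\to0$, but I would work with the projection directly, as the excerpt advocates, to avoid an exchange-of-limits step.

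The next step replaces the random quadratic forms by their means using the sub-gaussian assumption. For a fixed symmetric $\M$ independent of $\z$, the Hanson--Wright inequality gives $\x^\top\M\x=\tr(\Sigmab\M)+O(\|\Sigmab^{\frac12}\M\Sigmab^{\frac12}\|_F)$ with sub-exponential fluctuations. Applied to the denominator with $\M=\P_\perp^{(-i)}$ (a projection), it shows $\x_i^\top\P_\perp^{(-i)}\x_i$ concentrates around $\tr(\Sigmab\P_\perp^{(-i)})$ with relative error of order $\sqrt{\|\Sigmab\|/\tr(\Sigmab\P_\perp^{(-i)})}$; since $\tr(\Sigmab\P_\perp^{(-i)})$ is comparable to $\tr\Sigmab$ (using the bound $\gamma\|\Sigmab\|=O(1/\rho)$, which follows from the equation defining $\gamma$), this relative error is $O(1/\sqrt r)$. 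This is precisely where the stable rank $r=\tr(\Sigmab)/\|\Sigmab\|$, rather than the ambient dimension $n$, governs the accuracy, and why no assumption on the extreme singular values of $\Sigmab$ is needed. Averaging the numerator likewise replaces $\P_\perp^{(-i)}\x_i\x_i^\top\P_\perp^{(-i)}$ by $\P_\perp^{(-i)}\Sigmab\P_\perp^{(-i)}$.

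Taking expectations, using row exchangeability so that $\E[\P_\perp^{(-i)}]$ does not depend on $i$, and substituting the concentrated quadratic forms converts the rank-one identity (summed over $i$ so as to close at fixed $k$) into the approximate relation $\I-\E[\P_\perp]\approx\gamma\,\Sigmab\,\E[\P_\perp]$ with $\gamma=k/\tr(\Sigmab\,\E[\P_\perp])$; equivalently, $\mathbf M=\E[\P_\perp]$ approximately solves the fixed-point equation $\mathbf M=(\gamma(\mathbf M)\,\Sigmab+\I)^{-1}$ whose unique solution is $\bar\P_\perp$. The care here lies in handling the expectation of a ratio (decoupling numerator and denominator, which requires controlling their joint fluctuation) and in verifying that replacing $\P_\perp^{(-i)}$ by the full $\P_\perp$ inside the traces costs only another $O(1/\sqrt r)$, so that the $k$ per-row contributions combine coherently through the single scalar $\tr(\Sigmab\,\E[\P_\perp])$ rather than accumulating.

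Finally I would turn the approximate identity into the stated two-sided operator (L\"owner) inequality by a stability argument: working in the whitened coordinates $\bar\P_\perp^{-\frac12}(\cdot)\bar\P_\perp^{-\frac12}$, show that the map $\mathbf M\mapsto(\gamma(\mathbf M)\,\Sigmab+\I)^{-1}$ is well-conditioned near $\bar\P_\perp$ (its linearization is a contraction, with constant depending only on $\rho$ and $K$), so that an $O(1/\sqrt r)$ residual in the self-consistent equation forces $\|\bar\P_\perp^{-\frac12}\E[\P_\perp]\bar\P_\perp^{-\frac12}-\I\|\le C_\rho/\sqrt r$, which is exactly the claimed bound. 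I expect this last stability step to be the main obstacle: one must establish the contraction in operator norm uniformly over the spectrum of $\Sigmab$ (the dangerous directions being the top eigenvectors, where $\gamma\sigma_i^2$ is largest), and simultaneously propagate a relative, PSD-ordered error rather than a mere norm bound. The ratio-expectation control of the previous paragraph is the secondary difficulty, since the denominator is itself a projection-weighted quadratic form that must be controlled on the same $1/\sqrt r$ scale and decorrelated from the numerator.
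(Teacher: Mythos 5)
Your plan follows essentially the same route as the paper's proof: the same leave-one-out pseudoinverse identity (Lemma~\ref{l:rank-one-update}), the same Hanson--Wright control of the quadratic forms with the stable rank governing the relative accuracy (Lemma~\ref{l:concentration-sub-gaussian}), the same a priori rank bound $\tr\,\Sigmab(\I-\P_{-k})\ge r-k$, and the same whitening step converting a bound on $\|\I-\E[\P_\perp]\bar\P_\perp^{-1}\|$ into the two-sided PSD ordering. The only real difference is the packaging of the last step. The paper does not run a matrix fixed-point argument: it expands $\I-\E[\P_\perp]\bar\P_\perp^{-1}=\gamma\T_1-\gamma\T_2+\gamma\T_3+\T_4$ using truncation events $E_i$ on which the denominators exceed $\frac12\tr\,\Sigmab(\I-\P_{-i})\ge\frac12(r-k)$, and the self-referential difficulty---bounding the bias $|\bar s-s|$ between $\bar s=k/\gamma$ and $s=\tr\,\Sigmab\,\E[\I-\P_{-k}]$---is resolved by a scalar self-bounding inequality whose key coefficient is $\frac{\gamma}{\gamma+1}<1$. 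Your contraction step, carried out correctly, collapses to exactly this: the linearization of $\M\mapsto(\gamma(\M)\Sigmab+\I)^{-1}$ with $\gamma(\M)=k/\tr(\Sigmab\M)$ at $\bar\P_\perp$ is the \emph{rank-one} map $d\M\mapsto\frac{\gamma^2}{k}\tr(\Sigmab\,d\M)\,\bar\P_\perp\Sigmab\bar\P_\perp$, so matrix-level stability reduces to a scalar equation in $\tr(\Sigmab\,d\M)$, and (after normalizing $\|\Sigmab\|=1$) the contraction factor in that direction is $\frac{\gamma^2}{k}\tr(\Sigmab\bar\P_\perp\Sigmab\bar\P_\perp)\le\frac{\gamma}{\gamma+1}\le\frac1\rho$; your worry about the top eigendirections is moot because the output direction $\bar\P_\perp\Sigmab\bar\P_\perp$ damps precisely those directions. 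So the step you anticipate as the main obstacle is fine, and is the same mathematics as the paper's scalar argument in different clothing.

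The one ingredient your plan genuinely lacks concerns the second source of fluctuation in the denominator. Hanson--Wright, applied conditionally on $\P_{-k}$, centers $\hat s=\x_k^\top(\I-\P_{-k})\x_k$ at the \emph{random} quantity $\tr\,\Sigmab(\I-\P_{-k})$, not at the deterministic scalar $\tr\,\Sigmab\,\E[\P_\perp]$ that your fixed-point equation requires. The gap between the two is $\tr\,\Sigmab(\P_{-k}-\E[\P_{-k}])$ (plus an $O(1)$ leave-one-out shift), and this is not a sub-gaussian quadratic form in any single row---it is a function of all $k-1$ remaining rows. The paper controls it with a martingale-difference argument (Doob filtration over the rows, then Burkholder / Azuma--Hoeffding, Lemma~\ref{l:trace}), giving $\E\big[\big(\tr\,\Sigmab(\P_{-k}-\E[\P_{-k}])\big)^2\big]\le Ck$; this is what makes the total fluctuation of $\hat s$ of order $\sqrt r$, hence the residual of order $1/\sqrt r$ after dividing by the denominator $\gtrsim r-k$. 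Without this lemma (or a substitute such as Efron--Stein), your claim that the self-consistent equation holds with $O(1/\sqrt r)$ residual does not follow from Hanson--Wright alone. Relatedly, the ``decoupling of the ratio'' you flag is handled in the paper not by decorrelation but by explicit truncation: the events $E_i$ and the complementary terms $\T_2,\T_4$ are bounded via $\Pr(\neg E)$, which is exponentially small and then coarsened to $C_\rho/r$; note also that $\T_2$ is non-symmetric and requires the symmetrization $-(\A\A^\top+\B\B^\top)\preceq\A\B^\top+\B\A^\top\preceq\A\A^\top+\B\B^\top$ before its norm can be controlled. Supplying these two mechanisms would turn your outline into the paper's proof.
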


We first provide the following informal derivation of the expression for $\bar \P_\perp$
given in Theorem~\ref{t:main-tech}. 
Let us use $\P$ to denote the matrix $\X^\dagger \X=\I-\P_{\perp}$. Using a 
rank-one update formula for the Moore-Penrose pseudoinverse (see
Lemma~\ref{l:rank-one-update} in the appendix) we have
\[\I-\E[\P_{\perp}]=\E[\P] = \E\big[(\X^\top \X)^\dagger \X^\top \X\big]= \sum_{i=1}^k
  \E[(\X^\top \X)^\dagger \x_i \x_i^\top]  = k\, \E\! \left[\frac{ (\I
      - \P_{-k}) \x_k \x_k^\top }{ \x_k^\top (\I - \P_{-k}) \x_k }
  \right],\]
where we use $\x_i^\top$ to denote the $i$-th row of $\X$, and
$\P_{-k} = \X_{-k}^\dagger \X_{-k} $, where $\X_{-i} $ is the matrix
$\X$ without its $i$-th row. Thanks to the 
Hanson-Wright inequality, the quadratic form $\x_k^\top (\I -
\P_{-k}) \x_k$ in the denominator concentrates around
its expectation (with respect to $\x_k$), i.e., $\tr\Sigmab(\I-\P_{-k})$,
% and thus $\E[\P] \simeq k\, \E\! \left[\frac{ (\I - \P_{-k}) \x_k \x_k^\top }{ \tr \Sigmab (\I - \P_{-k}) } \right] = k\E \left[\frac{ (\I - \P_{-k}) \Sigmab }{ \tr \Sigmab (\I - \P_{-k}) } \right]$,
where we use $\E[\x_k \x_k^\top] = \Sigmab$. Further note that, with
$\P_{-k} \simeq \P$ for large $k$ and $\frac1k \tr \Sigmab (\I -
\P_{-k}) \simeq \frac1k \tr \Sigmab \E[\P_\perp]$ from a concentration
argument, we conclude~that
\begin{align*}
\I - \E[\P_\perp] \simeq \frac{ k \E[\P_\perp] \Sigmab }{
  \tr \Sigmab \E[\P_\perp] }\qquad\Longrightarrow\qquad \E[\P_\perp] \simeq  \Big( \frac{ k\Sigmab}{ \tr \Sigmab \E[\P_\perp] } + \I \Big)^{-1},
\end{align*}
  % from which it follows that $\E[\P_\perp] \simeq  \big( \frac{ k\Sigmab}{ \tr \Sigmab \E[\P_\perp] } + \I \big)^{-1}$,
  and thus $\E[\P_\perp] \simeq \bar \P_\perp$ for $\bar \P_\perp =
(\gamma \Sigmab + \I)^{-1}$ and $\gamma^{-1} = \frac1k \tr \Sigmab
\bar \P_\perp$. This leads to the (implicit) expression for $\bar
\P_\perp$ and $ \gamma$ given in Theorem \ref{t:main-tech}.  

\medskip

\subsection{Proof sketch of Theorem~\ref{t:main-tech}}
\label{subsec:proof-sketch}

To make the above intuition rigorous, we next present a proof sketch
for Theorem~\ref{t:main-tech}, with the detailed proof deferred to
Appendix~\ref{sec:proof-of-theo-main-tech}.  The proof can be divided
into the following three steps.

\paragraph{Step 1.} First note that, to obtain the lower and upper bound for $\E[\P_\perp]$ in the sense of symmetric matrix as in Theorem~\ref{t:main-tech}, it suffices to bound the spectral
norm $\| \I - \E[\P_\perp] \bar \P_\perp^{-1} \| \le \frac{C_\rho}{\sqrt r}$, so that, with $\frac{\rho-1}{\rho}\I \preceq \bar\P_\perp \preceq \I$ for $\rho = r/k > 1$ from the definition of $\bar \P_\perp$, we have
\[
  \|\I-\bar\P_\perp^{-\frac12}\E[\P_\perp]\bar\P_\perp^{-\frac12}\| =
    \|\bar\P_\perp^{-\frac12}(\I-\E[\P_\perp]\bar\P^{-1})\bar\P_\perp^{\frac12}\|\leq
  \frac{C_\rho}{\sqrt r} \sqrt{\frac{\rho}{\rho-1}} =: \epsilon.
\]
This means that all eigenvalues 
  of the p.s.d.\@ matrix $\bar\P_\perp^{-\frac12}\E[\P_\perp]\bar\P_\perp^{-\frac12}$ lie
  in the interval $[1-\epsilon,1+\epsilon]$, so $(1-\epsilon)\I\preceq
    \bar\P_\perp^{-\frac12}\E[\P_\perp]\bar\P_\perp^{-\frac12}\preceq (1+\epsilon)\I.$ Multiplying by $\bar\P_{\perp}^{\frac12}$ from both sides, we obtain the
  desired bound.
 \paragraph{Step 2.} Then, we carefully design an event $E$ that (i)
 is provable to occur with high probability and (ii) ensures that the
 denominators in the following decomposition are bounded away from
 zero:
 % based on (the discussion of) which the object of interest $\I - \E[\P_\perp ] \bar \P_\perp^{-1}$ can be decomposed as
  \begin{align*}
  \I-\E[\P_\perp]\bar\P_\perp^{-1}\!
    &= \E[\P] - \gamma\E[\P_\perp]\Sigmab = \E[\P \cdot \one_E] +
    \E[\P  \cdot \one_{\neg E}]
    -\gamma\E[\P_\perp]\Sigmab\\
  &=\gamma\underbrace{\E\bigg[(\bar s-\hat
    s)\,\frac{(\I -\P_{-k})\x_k\x_k^\top}{\x_k^\top(\I -\P_{-k})\x_k} \cdot \one_E\bigg]}_{\T_1}
    - \gamma \underbrace{\E[ (\I - \P_{-k}) \x_k\x_k^\top \cdot
    \one_{\neg E}]}_{\T_2}
%    \\ &\quad
         + \gamma \underbrace{\E[\P-\P_{-k}]\Sigmab}_{\T_3} + \underbrace{\E[\P \cdot \one_{\neg E}]}_{\T_4},
    %\gamma \underbrace{\E[\P_{-k}\x_k\x_k^\top\one_E] - \E[\P_{-k}]\Sigmab}_{\T_2}
\end{align*}
where we let $\hat s = \x_k^\top(\I - \P_{-k})\x_k$ and $\bar s=k/\gamma$. 
\paragraph{Step 3.}
It then remains to bound the spectral norms of $ \T_1, \T_2, \T_3
,\T_4$ respectively to reach the conclusion. More precisely, the terms
$\| \T_2 \| $ and $\| \T_4 \|$ are proportional to $\Pr (\neg E)$,
while the term $\| \T_3 \|$ can be bounded using the rank-one update
formula for the pseudoinverse (Lemma~\ref{l:rank-one-update} in the
appendix). The remaining term $\| \T_1 \|$ is more subtle and can be
bounded with a careful application of  the Hanson-Wright inequality.
This allows for a bound on the operator norm $\| \I -
\E[\P_\perp ] \bar \P_\perp^{-1}\| $ and hence the conclusion.

\subsection{Proof of Theorem \ref{t:main}}\label{s:reduction}
We now discuss how Theorem \ref{t:main} can be obtained from Theorem
\ref{t:main-tech}. The crucial difference between the statements is
that in Theorem \ref{t:main} we let $\A$ be an arbitrary rectangular
matrix, whereas in Theorem \ref{t:main-tech} we instead use a square,
symmetric and positive semi-definite matrix $\Sigmab$. To convert between the
two notations, consider the SVD decomposition
$\A=\U\D\V^\top$ of $\A \in \R^{m \times n}$ (recall that we assume $m
\ge n$), where $\U\in\R^{m\times n}$ and $\V\in\R^{n\times 
  n}$ have orthonormal columns and $\D$ is a diagonal matrix. Now, let
$\Z = \S\U$, $\Sigmab=\D^2$ and  
$\X=\Z\Sigmab^{\frac12}=\S\U\D$. Using the fact that
$\V^\top\V=\V\V^\top=\I$, it follows that:
\begin{align*}
  \I - (\S\A)^\dagger\S\A = \V(\I-\X^\dagger\X)\V^\top\quad\text{and}\quad
  (\gamma\A^\top\A+\I)^{-1} = \V(\gamma\Sigmab + \I)^{-1}\V^\top.
\end{align*}
Since the rows of $\S$ consist of mean zero unit variance
i.i.d. sub-gaussian entries, they satisfy the Hanson-Wright inequality
for any matrix \cite[Theorem 1]{rudelson2013hanson}, including matrices of the
form $\U\B\U^\top$. Thus, the rows of $\Z=\S\U$ also satisfy
Hanson-Wright with the same constant (even though their entries are
not necessarily independent).
% Note that since $\|\U\v\|=\|\v\|$, the rows of $\Z$ are
% sub-gaussian with the same constant as the rows of $\S$.
Moreover,
using the fact that $\B\preceq\C$ implies
$\V\B\V^\top\preceq\V\C\V^\top$ for any p.s.d.~matrices $\B$ and $\C$,
Theorem \ref{t:main} follows as a corollary of Theorem \ref{t:main-tech}.

\section{Explicit formulas under known spectral decay}
\label{s:explicit}
The expression we give for the expected residual projection,
$\E[\P_{\perp}]\simeq (\gamma\A^\top\A+\I)^{-1}$, is implicit
in that it depends on the parameter $\gamma$ which is the solution of
the following equation:
\begin{align}
  \sum_{i\geq 1}\frac{\gamma\sigma_i^2}{\gamma\sigma_i^2+1} =
  k,\qquad\text{where $\sigma_i$ are the singular values of $\A$.}\label{eq:equation}
\end{align}
In general, it is impossible to solve this equation analytically, i.e., to write
$\gamma$ as an explicit formula of $n$, $k$ and
the singular values of $\A$. However, we show that when the 
singular values exhibit a known rate of decay, then it is
possible to obtain explicit formulas for $\gamma$. In particular, this
allows us to provide precise and easily interpretable rates of decay for the low-rank
approximation error of a sub-gaussian sketch.

Matrices that have known spectral decay, most commonly with either
exponential or polynomial rate, arise in many machine
learning problems \cite{randomized-newton}. Such behavior can be naturally
occurring in data, or it can be induced by feature expansion using, say, the
RBF kernel (for exponential decay) \cite{Santa97Gaussianregression} or the
Mat\'ern kernel (for polynomial decay) \cite{RasmussenWilliams06}. Understanding these
two classes of decay plays an important role in distinguishing the
properties of light-tailed and heavy-tailed data distributions.
Note that in the kernel setting we may often represent our data via
the $m\times m$ kernel matrix $\K$, instead of the $m\times n$ data
matrix $\A$, and study the sketched Nystr\"om method
\cite{revisiting-nystrom} for low-rank approximation. To handle 
the kernel setting in our analysis, it suffices to replace the squared singular
values $\sigma_i^2$ of $\A$ with the eigenvalues of $\K$.
\begin{figure}[t]
\centering
\subfigure[%
Singular values are given by $\sigma_i^2=C\cdot \alpha^{i-1}$.]{%
    \includegraphics[width=0.48\textwidth]{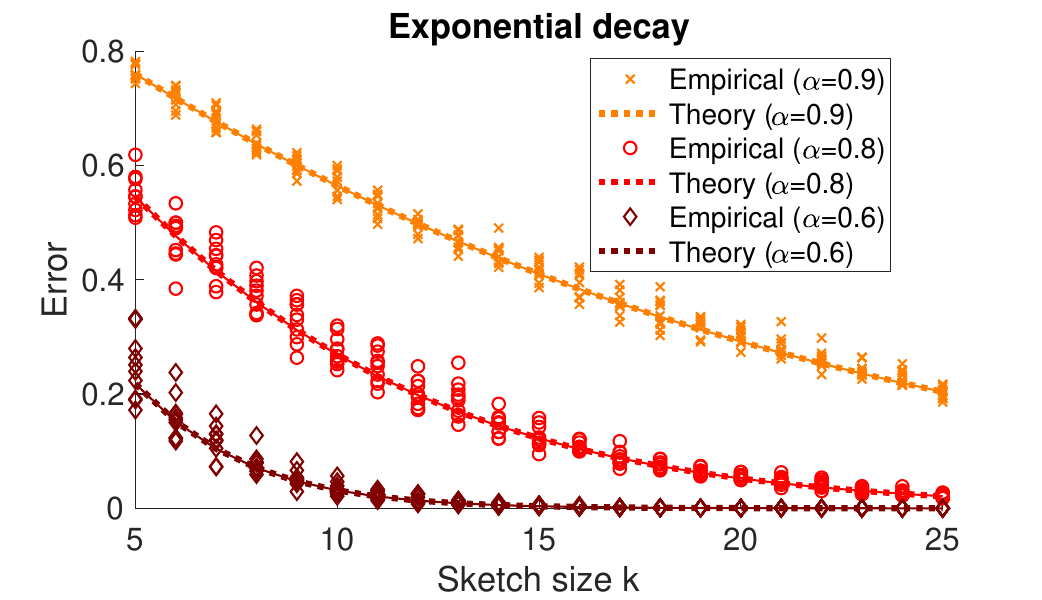}
  }
\hfill
\subfigure[%
Singular values are given by $\sigma_i^2=C\cdot i^{-\beta}$.
]{%
      \includegraphics[width=0.48\textwidth]{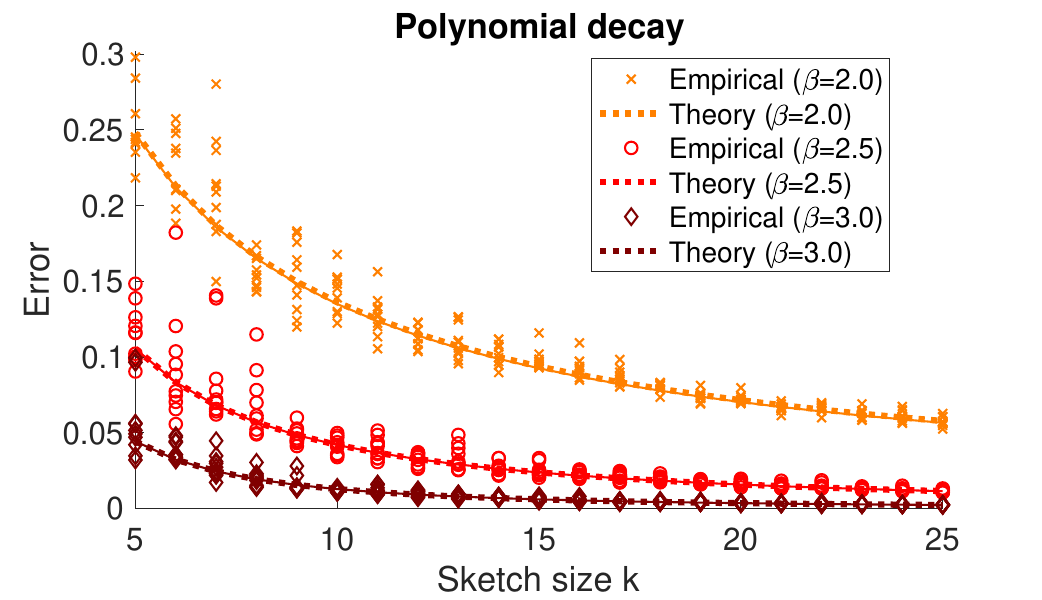}
    }
\caption{
Theoretical predictions of low-rank approximation error of a Gaussian
sketch under known spectral decays, compared to the empirical results.
The constant $C$ is scaled so that $\|\A\|_F^2=1$ and we let
$n=m=1000$. For the theory, we plot the explicit formulas
\eqref{eq:explicit-exp} and \eqref{eq:explicit-poly} (dashed lines),
as well as the implicit expression from Corollary \ref{c:low-rank}
(thin solid lines) obtained by numerically solving
\eqref{eq:equation}. Observe that the explicit and implicit
predictions are nearly (but not exactly) identical.
}
%\vspace{-5mm}
\label{f:explicit}
\end{figure}

\subsection{Exponential spectral decay}
Suppose that the squared singular values of $\A$ exhibit exponential
decay, i.e. $\sigma_i^2 = C\cdot \alpha^{i-1}$, where $C$ is a constant
and $\alpha\in(0,1)$. For simplicity of presentation, we will let
$m,n\rightarrow\infty$. Under this spectral decay, we can approximate
the sum in \eqref{eq:equation} by the analytically computable integral
$\int_y^\infty\!\frac1{1+(C\gamma)^{-1}\alpha^{-x}} dx$,
obtaining $\gamma\approx(\alpha^{-k}-1)\sqrt\alpha/C$. Applying this to the
formula from Corollary~\ref{c:low-rank}, we can express the low-rank
approximation error  
for a sketch of size $k$ as follows:
\begin{align}
  \E\big[\|\A-\A\P\|_F^2\big] \approx
\frac C{\sqrt\alpha}\cdot
  \frac{k}{\alpha^{-k}-1},\quad\text{when}\quad\sigma_i^2=C\cdot\alpha^{i-1}\
  \ \text{for all $i$.}\label{eq:explicit-exp}
\end{align}
In Figure \ref{f:explicit}a, we plot the above formula against the
numerically obtained implicit expression from
Corollary~\ref{c:low-rank}, as well as empirical results for a Gaussian 
sketch. First, we observe that the theoretical predictions closely
align with empirical values even after the sketch size crosses the
stable rank $r\approx\frac1{1-\alpha}$, suggesting that Theorem~\ref{t:main} can
be extended to this regime. Second, while it is not surprising that
the error decays at a similar 
rate as the singular values, our predictions offer a much more precise
description, down to lower order effects and
even constant factors. For instance, we observe that the error
(normalized by $\|\A\|_F^2$, as in the figure) only
starts decaying exponentially after $k$ crosses the stable rank, and
until that point it decreases at a linear rate with slope
$-\frac{1-\alpha}{2\sqrt\alpha}$.

\subsection{Polynomial spectral decay}
We now turn to polynomial spectral decay, which is a natural model for
analyzing heavy-tailed data distributions. Let $\A$ have squared
singular values $\sigma_i^2=C\cdot i^{-\beta}$ for some $\beta\geq 2$, and let
$m,n\rightarrow\infty$. As in the case of exponential decay,
we use the integral $\int_y^\infty\!\frac1{1+(C\gamma)^{-1}x^{-\beta}}dx$
to approximate the sum in \eqref{eq:equation}, and 
solve for $\gamma$, obtaining $\gamma \approx
\big((k+\frac12)\frac\beta\pi\sin(\frac\pi\beta)\big)^\beta$. Combining
this with Corollary \ref{c:low-rank} we get:
\begin{align}
  \E\big[\|\A-\A\P\|_F^2\big] \approx
  C\cdot\frac{k}{(k+\frac12)^\beta}\bigg(\frac{\pi/\beta}{\sin(\pi/\beta)}\bigg)^\beta,
  \quad\text{when}\quad\sigma_i^2=C\cdot
  i^{-\beta}\ \ \text{for all $i$.}\label{eq:explicit-poly}
\end{align}

Figure \ref{f:explicit}b compares our predictions to the empirical
results for several values of $\beta$. In all of these cases, the
stable rank is close to 1, and yet the theoretical predictions align
very well with the empirical results. Overall, the asymptotic rate of decay of the error is
$k^{1-\beta}$.  However it is easy to verify that the lower order
effect of $(k+\frac12)^\beta$ appearing instead of $k^\beta$
in \eqref{eq:explicit-poly} significantly changes the trajectory for
small values of $k$. Also, note that as $\beta$ grows large, the
constant $\big(\frac{\pi/\beta}{\sin(\pi/\beta)}\big)^\beta$ goes to
$1$, but it plays a significant role for $\beta=2$ or $3$
(roughly, scaling the expression by a factor of $2$). Finally, we
remark that for $\beta\in(1,2)$, our integral approximation of
\eqref{eq:equation} becomes less accurate. We expect that a corrected
expression is possible, but likely more complicated and less interpretable.
\vspace{3mm}

\ifisarxiv
\else
\begin{minipage}{.48\textwidth}
\hspace{-4.5mm}\includegraphics[width=.57\textwidth]{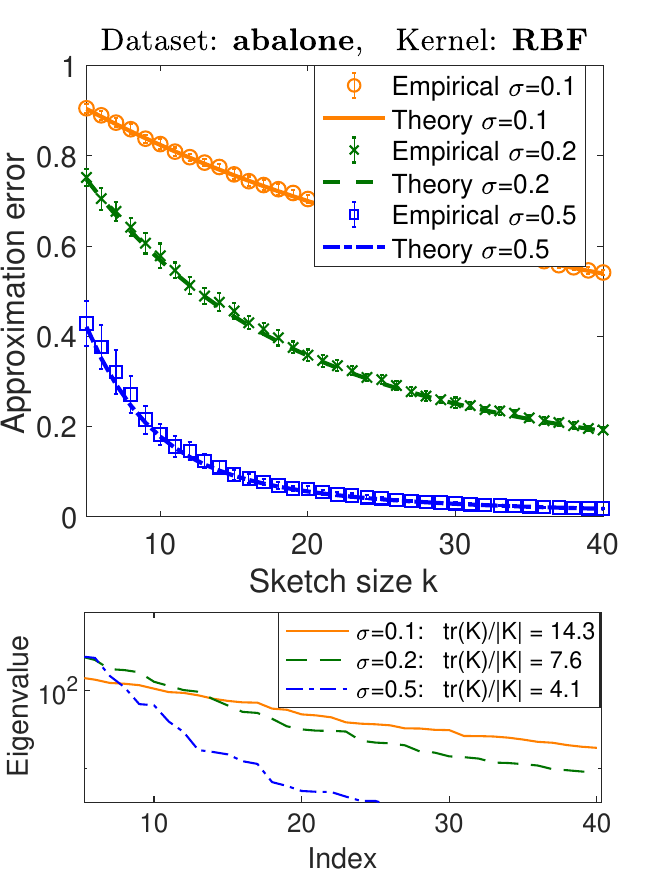}\nobreak\hspace{-2.5mm}\includegraphics[width=.57\textwidth]{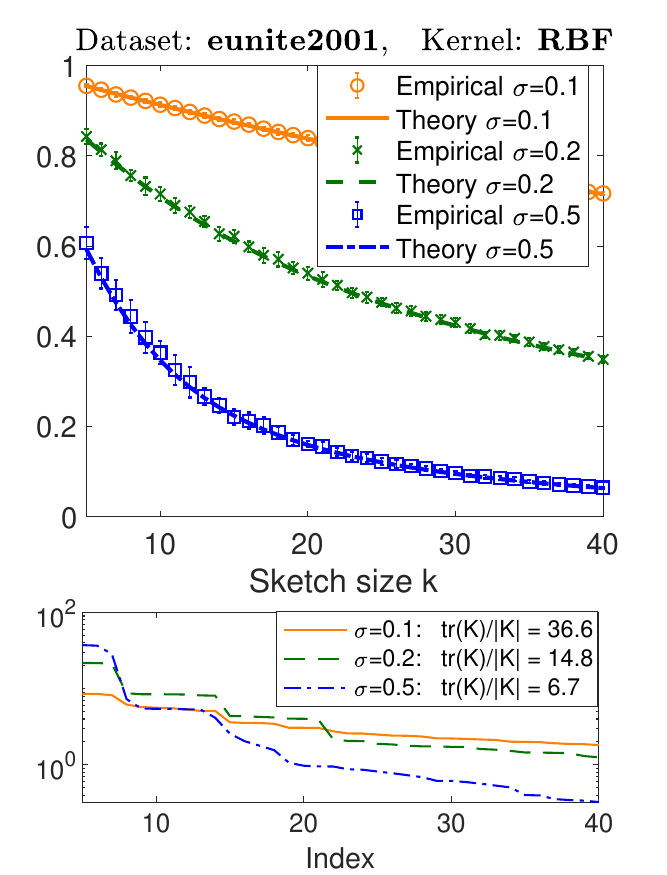}\vspace{-1mm}
  \captionof{figure}{Theoretical predictions versus approximation error for the
    sketched Nystr\"om with the RBF kernel
    (spectral decay shown at the bottom).}\label{f:nystrom}
\end{minipage}\hspace{4mm}
\begin{minipage}{.48\textwidth}
\hspace{-2.5mm}\includegraphics[width=.57\textwidth]{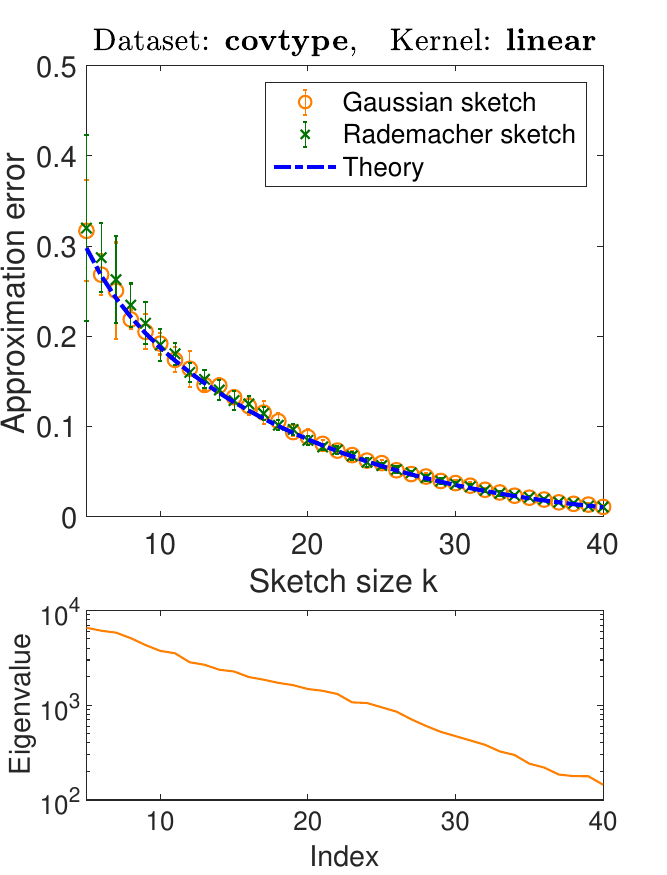}\nobreak\hspace{-2.5mm}\includegraphics[width=.57\textwidth]{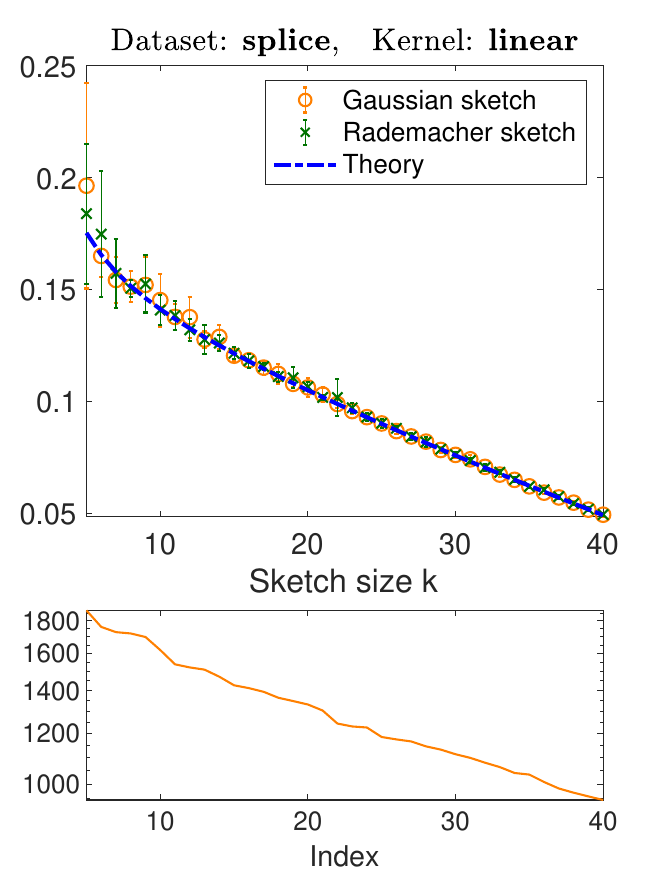}\vspace{-1mm}
  \captionof{figure}{Theoretical predictions versus approximation error for
the Gaussian and Rademacher sketches
  (spectral decay shown at the bottom).}\label{f:linear}
\end{minipage}
\fi

\section{Empirical results}\label{s:experiments}

In this section, we numerically verify the accuracy of our theoretical
predictions for the low-rank approximation error of sketching on
benchmark datasets from the libsvm repository 
\cite{libsvm} (further numerical results are in
Appendix~\ref{a:experiments}). We repeated every experiment 10 times,
and plot 
both the average and standard deviation of the results. We use the
following $k\times m$ sketching matrices $\S$:\vspace{-1mm}
\begin{enumerate}
\item \emph{Gaussian sketch:} with i.i.d.\@ standard normal entries;
\item \emph{Rademacher sketch:} with
  i.i.d.\@ entries equal $1$ with probability 0.5 and $-1$ otherwise.
\end{enumerate}
\vspace{-2mm}

\ifisarxiv
\begin{figure}
  \centering
\includegraphics[width=.43\textwidth]{abalone-nystrom}\nobreak\includegraphics[width=.43\textwidth]{eunite-nystrom}
  \caption{Theoretical predictions versus approximation error for the
    sketched Nystr\"om with the RBF kernel
    (spectral decay shown at the bottom).}\label{f:nystrom}
\end{figure}
\fi

\ifisarxiv
\begin{figure}
  \centering
\includegraphics[width=.43\textwidth]{covtype-linear}\nobreak\includegraphics[width=.43\textwidth]{splice-linear}
  \caption{Theoretical predictions versus approximation error for
the Gaussian and Rademacher sketches
  (spectral decay shown at the bottom).}\label{f:linear}
\end{figure}
\fi

\paragraph{Varying spectral decay.}  To demonstrate the role of spectral
decay and the stable rank on the approximation error, we performed
feature expansion using the radial basis function (RBF) kernel
$k(\a_i,\a_j)=\exp(-\|\a_i-\a_j\|^2/(2\sigma^2))$, obtaining an $m\times m$
kernel matrix $\K$. We used the sketched Nystr\"om method to construct
a low-rank approximation $\tilde
\K=\K\S^\top(\S\K\S^\top)^\dagger\S\K$, and computed the normalized trace norm 
error $\|\K-\tilde\K\|_*/\|\K\|_*$. The theoretical predictions are coming from
\eqref{eq:nystrom}, which in turn uses Theorem \ref{t:main}.
Following \cite{revisiting-nystrom}, we use the RBF kernel because
varying the scale parameter $\sigma$ 
allows us to observe the approximation error under qualitatively
different spectral decay profiles of the kernel. In Figure \ref{f:nystrom},
we present the results for the Gaussian sketch on two datasets, with three values
of $\sigma$, and in all cases our theory aligns with the empirical
results. Furthermore, as smaller $\sigma$ leads to slower spectral decay and 
larger stable rank, it also makes the approximation error decay more
linearly for small sketch sizes. This behavior is predicted by our explicit
expressions \eqref{eq:explicit-exp} for the error under exponential spectral decay from
Section \ref{s:explicit}. Once the sketch sizes are sufficiently
larger than the stable rank of $\K^{\frac12}$, the error starts decaying at an
exponential rate. Note that Theorem \ref{t:main} only guarantees
accuracy of our expressions for sketch sizes below the stable rank,
however the predictions are accurate regardless of this constraint.
\vspace{-2mm}
\paragraph{Varying sketch type.} In the next set of
empirical results, we compare the performance of Gaussian and Rademacher
sketches, and also verify the theory when sketching the data matrix $\A$
without kernel expansion, plotting
$\|\A-\A(\S\A)^\dagger\S\A\|_F^2/\|\A\|_F^2$.
Since both of the sketching methods have 
sub-gaussian entries, Corollary~\ref{c:low-rank} predicts that they
should have comparable performance in this task and match our
expressions. This is exactly what we observe in Figure \ref{f:linear}
for two datasets and a range of sketching sizes, as well as in other
empirical results shown in Appendix \ref{a:experiments}.

\section{Conclusions}

We derived the first theoretically supported precise expressions for the
expected residual projection matrix, which is a central component in
the analysis of RandNLA dimensionality reduction via sketching. Our
analysis provides a new understanding of low-rank approximation, the Nystr\"om method, and the
convergence properties of many randomized iterative algorithms. As a direction for future work, we conjecture that our
main result can be extended to sketch sizes larger than the stable
rank of the data matrix.

\ifisarxiv\else
\section*{Broader Impact}
In this paper, we investigate the spectral properties of residual
(random) projection matrices, commonly appearing in various
sketching-based methods. The precise theoretical description given in
this paper provides performance guarantees for popular algorithms
such as low-rank approximation and many randomized (iterative)
optimization methods, and contributes to the development of more
robust and reliable large-scale learning systems. The
theoretical framework developed in this work presents no foreseeable
negative societal consequence. 
\fi

\paragraph{Acknowledgments.}
We would like to acknowledge DARPA, IARPA, NSF, and ONR via its BRC on
RandNLA for providing partial support of this work.  Our conclusions
do not necessarily reflect the position or the policy of our sponsors,
and no official endorsement should be inferred.

\bibliographystyle{alpha}
\bibliography{../pap}

\newcommand{\etalchar}[1]{$^{#1}$}
\begin{thebibliography}{BRVDW19}

\bibitem[Ach03]{achlioptas2003database}
Dimitris Achlioptas.
\newblock Database-friendly random projections: Johnson-lindenstrauss with
  binary coins.
\newblock {\em Journal of computer and System Sciences}, 66(4):671--687, 2003.

\bibitem[AM15]{ridge-leverage-scores}
Ahmed~El Alaoui and Michael~W. Mahoney.
\newblock Fast randomized kernel ridge regression with statistical guarantees.
\newblock In {\em Proceedings of the 28th International Conference on Neural
  Information Processing Systems}, pages 775--783, 2015.

\bibitem[AMT10]{blendenpik}
Haim Avron, Petar Maymounkov, and Sivan Toledo.
\newblock Blendenpik: Supercharging lapack's least-squares solver.
\newblock {\em SIAM Journal on Scientific Computing}, 32(3):1217--1236, 2010.

\bibitem[BJ03]{Bach2003}
Francis~R. Bach and Michael~I. Jordan.
\newblock Kernel independent component analysis.
\newblock {\em J. Mach. Learn. Res.}, 3:1--48, March 2003.

\bibitem[BLLT19]{BLLT19_TR}
P.~L. Bartlett, P.~M. Long, G.~Lugosi, and A.~Tsigler.
\newblock Benign overfitting in linear regression.
\newblock Technical Report Preprint: arXiv:1906.11300, 2019.

\bibitem[BMD08]{BoutsidisMD08}
Christos Boutsidis, Michael Mahoney, and Petros Drineas.
\newblock An improved approximation algorithm for the column subset selection
  problem.
\newblock {\em Proceedings of the Annual ACM-SIAM Symposium on Discrete
  Algorithms}, 12 2008.

\bibitem[BRVDW19]{sparse-variational-gp}
David Burt, Carl~Edward Rasmussen, and Mark Van Der~Wilk.
\newblock Rates of convergence for sparse variational {G}aussian process
  regression.
\newblock In Kamalika Chaudhuri and Ruslan Salakhutdinov, editors, {\em
  Proceedings of the 36th International Conference on Machine Learning},
  volume~97 of {\em Proceedings of Machine Learning Research}, pages 862--871,
  Long Beach, California, USA, 09--15 Jun 2019. PMLR.

\bibitem[BS10]{bai2010spectral}
Zhidong Bai and Jack~W Silverstein.
\newblock {\em Spectral analysis of large dimensional random matrices},
  volume~20.
\newblock Springer, 2010.

\bibitem[Bur73]{burkholder1973distribution}
Donald~L Burkholder.
\newblock Distribution function inequalities for martingales.
\newblock {\em the Annals of Probability}, pages 19--42, 1973.

\bibitem[CEM{\etalchar{+}}15]{projection-cost-preserving}
Michael~B. Cohen, Sam Elder, Cameron Musco, Christopher Musco, and Madalina
  Persu.
\newblock Dimensionality reduction for k-means clustering and low rank
  approximation.
\newblock In {\em Proceedings of the Forty-seventh Annual ACM Symposium on
  Theory of Computing}, STOC '15, pages 163--172, New York, NY, USA, 2015. ACM.

\bibitem[CL11]{libsvm}
Chih-Chung Chang and Chih-Jen Lin.
\newblock {LIBSVM}: A library for support vector machines.
\newblock {\em ACM Transactions on Intelligent Systems and Technology},
  2:27:1--27:27, 2011.

\bibitem[CNW16]{optimal-matrix-product}
Michael~B. Cohen, Jelani Nelson, and David~P. Woodruff.
\newblock Optimal approximate matrix product in terms of stable rank.
\newblock In {\em 43rd International Colloquium on Automata, Languages, and
  Programming, {ICALP} 2016, July 11-15, 2016, Rome, Italy}, pages 11:1--11:14,
  2016.

\bibitem[CW17]{cw-sparse}
Kenneth~L. Clarkson and David~P. Woodruff.
\newblock Low-rank approximation and regression in input sparsity time.
\newblock {\em J. ACM}, 63(6):54:1--54:45, January 2017.

\bibitem[DBPM20]{debiasing-second-order}
Micha{\l} Derezi{\'n}ski, Burak Bartan, Mert Pilanci, and Michael~W Mahoney.
\newblock Debiasing distributed second order optimization with surrogate
  sketching and scaled regularization.
\newblock In {\em Advances in Neural Information Processing Systems},
  volume~33, pages 6684--6695, 2020.

\bibitem[DKM20]{nystrom-multiple-descent}
Micha{\l} Derezi{\'n}ski, Rajiv Khanna, and Michael~W Mahoney.
\newblock Improved guarantees and a multiple-descent curve for {C}olumn
  {S}ubset {S}election and the {N}ystr\"om method.
\newblock In {\em Advances in Neural Information Processing Systems},
  volume~33, pages 4953--4964, 2020.

\bibitem[DL19]{dobriban2019asymptotics}
Edgar Dobriban and Sifan Liu.
\newblock Asymptotics for sketching in least squares regression.
\newblock In {\em Advances in Neural Information Processing Systems}, pages
  3675--3685, 2019.

\bibitem[DLLM20]{precise-expressions}
Micha{\l} Derezi{\'n}ski, Feynman Liang, Zhenyu Liao, and Michael~W Mahoney.
\newblock Precise expressions for random projections: Low-rank approximation
  and randomized newton.
\newblock In {\em Advances in Neural Information Processing Systems},
  volume~33, pages 18272--18283, 2020.

\bibitem[DLM20]{surrogate-design}
Micha{\l} Derezi{\'n}ski, Feynman Liang, and Michael~W Mahoney.
\newblock Exact expressions for double descent and implicit regularization via
  surrogate random design.
\newblock In {\em Advances in Neural Information Processing Systems},
  volume~33, pages 5152--5164, 2020.

\bibitem[DM16]{DM16_CACM}
Petros Drineas and Michael~W. Mahoney.
\newblock {RandNLA}: Randomized numerical linear algebra.
\newblock {\em Communications of the ACM}, 59:80--90, 2016.

\bibitem[DM18]{RandNLA_PCMIchapter_chapter}
P.~Drineas and M.~W. Mahoney.
\newblock Lectures on randomized numerical linear algebra.
\newblock In M.~W. Mahoney, J.~C. Duchi, and A.~C. Gilbert, editors, {\em The
  Mathematics of Data}, IAS/Park City Mathematics Series, pages 1--48.
  AMS/IAS/SIAM, 2018.

\bibitem[DM19]{determinantal-averaging}
Micha{\l} Derezi\'{n}ski and Michael~W Mahoney.
\newblock Distributed estimation of the inverse {H}essian by determinantal
  averaging.
\newblock In H.~Wallach, H.~Larochelle, A.~Beygelzimer, F.~d~Alch\'{e}-Buc,
  E.~Fox, and R.~Garnett, editors, {\em Advances in Neural Information
  Processing Systems 32}, pages 11401--11411. Curran Associates, Inc., 2019.

\bibitem[DM21]{dpps-in-randnla}
Micha{\l} Derezi{\'n}ski and Michael~W Mahoney.
\newblock Determinantal point processes in randomized numerical linear algebra.
\newblock {\em Notices of the American Mathematical Society}, 68(1):34--45,
  2021.

\bibitem[FSS20]{fanuel2020diversity}
Micha{\"e}l Fanuel, Joachim Schreurs, and Johan~AK Suykens.
\newblock Diversity sampling is an implicit regularization for kernel methods.
\newblock {\em arXiv:2002.08616}, 2020.

\bibitem[GKLR19]{Gower2019}
Robert Gower, Dmitry Koralev, Felix Lieder, and Peter Richtarik.
\newblock {RSN}: Randomized subspace {N}ewton.
\newblock In H.~Wallach, H.~Larochelle, A.~Beygelzimer, F.~d~Alch\'{e}-Buc,
  E.~Fox, and R.~Garnett, editors, {\em Advances in Neural Information
  Processing Systems 32}, pages 614--623. Curran Associates, Inc., 2019.

\bibitem[GM16]{revisiting-nystrom}
Alex Gittens and Michael~W. Mahoney.
\newblock Revisiting the {N}ystr\"{o}m method for improved large-scale machine
  learning.
\newblock {\em J. Mach. Learn. Res.}, 17(1):3977--4041, January 2016.

\bibitem[GR15]{generalized-kaczmarz}
Robert~M. Gower and Peter Richt\'arik.
\newblock Randomized iterative methods for linear systems.
\newblock {\em SIAM. J. Matrix Anal. \& Appl., 36(4), 1660--1690, 2015}, 2015.

\bibitem[GRB20]{jacsketch}
Robert Gower, Peter Richtárik, and Francis Bach.
\newblock Stochastic quasi-gradient methods: variance reduction via {J}acobian
  sketching.
\newblock {\em Mathematical Programming}, 05 2020.

\bibitem[HLN{\etalchar{+}}07]{hachem2007deterministic}
Walid Hachem, Philippe Loubaton, Jamal Najim, et~al.
\newblock Deterministic equivalents for certain functionals of large random
  matrices.
\newblock {\em The Annals of Applied Probability}, 17(3):875--930, 2007.

\bibitem[HMRT19]{HMRT19_TR}
T.~Hastie, A.~Montanari, S.~Rosset, and R.~J. Tibshirani.
\newblock Surprises in high-dimensional ridgeless least squares interpolation.
\newblock Technical Report Preprint: arXiv:1903.08560, 2019.

\bibitem[HMT11]{tropp2011structure}
Nathan Halko, Per-Gunnar Martinsson, and Joel~A Tropp.
\newblock Finding structure with randomness: Probabilistic algorithms for
  constructing approximate matrix decompositions.
\newblock {\em SIAM review}, 53(2):217--288, 2011.

\bibitem[LP19]{lacotte2019faster}
Jonathan Lacotte and Mert Pilanci.
\newblock Faster least squares optimization.
\newblock {\em arXiv preprint arXiv:1911.02675}, 2019.

\bibitem[LPP19]{lacotte2019high}
Jonathan Lacotte, Mert Pilanci, and Marco Pavone.
\newblock High-dimensional optimization in adaptive random subspaces.
\newblock In {\em Advances in Neural Information Processing Systems}, pages
  10846--10856, 2019.

\bibitem[Mah11]{Mah-mat-rev_JRNL}
Michael~W. Mahoney.
\newblock Randomized algorithms for matrices and data.
\newblock {\em Foundations and Trends in Machine Learning}, 3(2):123--224,
  2011.
\newblock Also available at: arXiv:1104.5557.

\bibitem[Mah12]{Mah12}
M.~W. Mahoney.
\newblock Approximate computation and implicit regularization for very
  large-scale data analysis.
\newblock In {\em Proceedings of the 31st ACM Symposium on Principles of
  Database Systems}, pages 143--154, 2012.

\bibitem[MDK20]{randomized-newton}
Mojmir Mutny, Micha{\l} Derezi\'nski, and Andreas Krause.
\newblock Convergence analysis of block coordinate algorithms with
  determinantal sampling.
\newblock In {\em International Conference on Artificial Intelligence and
  Statistics}, pages 3110--3120, 2020.

\bibitem[Mey73]{10.2307/2099767}
Carl~D. Meyer.
\newblock Generalized inversion of modified matrices.
\newblock {\em SIAM Journal on Applied Mathematics}, 24(3):315--323, 1973.

\bibitem[MSM14]{lsrn}
X.~Meng, M.~A. Saunders, and M.~W. Mahoney.
\newblock {LSRN}: A parallel iterative solver for strongly over- or
  under-determined systems.
\newblock {\em SIAM Journal on Scientific Computing}, 36(2):C95--C118, 2014.

\bibitem[NN13]{nn-sparse}
Jelani Nelson and Huy~L. Nguy\^{e}n.
\newblock Osnap: Faster numerical linear algebra algorithms via sparser
  subspace embeddings.
\newblock In {\em Proceedings of the 2013 IEEE 54th Annual Symposium on
  Foundations of Computer Science}, FOCS '13, pages 117--126, Washington, DC,
  USA, 2013. IEEE Computer Society.

\bibitem[PW16]{pilanci2016iterative}
Mert Pilanci and Martin~J Wainwright.
\newblock Iterative {H}essian sketch: Fast and accurate solution approximation
  for constrained least-squares.
\newblock {\em The Journal of Machine Learning Research}, 17(1):1842--1879,
  2016.

\bibitem[QR16]{Qu2016}
Zheng Qu and Peter Richt{\'a}rik.
\newblock Coordinate descent with arbitrary sampling {II}: Expected separable
  overapproximation.
\newblock {\em Optimization Methods and Software}, 31(5):858--884, 2016.

\bibitem[QRTF16]{Qu2015Feb}
Zheng Qu, Peter Richt\'arik, Martin Tak\'ac, and Olivier Fercoq.
\newblock {SDNA: Stochastic Dual {N}ewton Ascent for Empirical Risk
  Minimization}.
\newblock {\em Proceedings of The 33rd International Conference on Machine
  Learning}, Feb 2016.

\bibitem[RKM19]{roosta2019sub}
Farbod Roosta-Khorasani and Michael~W Mahoney.
\newblock Sub-sampled {N}ewton methods.
\newblock {\em Mathematical Programming}, 174(1-2):293--326, 2019.

\bibitem[RLXM18]{fred_newtonMR_TR}
F.~Roosta, Y.~Liu, P.~Xu, and M.~W. Mahoney.
\newblock {Newton-MR}: {N}ewton's method without smoothness or convexity.
\newblock Technical report, 2018.
\newblock Preprint: arXiv:1810.00303.

\bibitem[RM16]{GarveshMahoney_JMLR}
G.~Raskutti and M.~W. Mahoney.
\newblock A statistical perspective on randomized sketching for ordinary
  least-squares.
\newblock {\em Journal of Machine Learning Research}, 17(214):1--31, 2016.

\bibitem[RV13]{rudelson2013hanson}
Mark Rudelson and Roman Vershynin.
\newblock {Hanson-Wright} inequality and sub-gaussian concentration.
\newblock {\em Electronic Communications in Probability}, 18, 2013.

\bibitem[RW06]{RasmussenWilliams06}
C.~E. Rasmussen and C.~K.~I. Williams.
\newblock {\em Gaussian Processes for Machine Learning}.
\newblock MIT Press, 2006.

\bibitem[Sar06]{sarlos-sketching}
Tamas Sarlos.
\newblock Improved approximation algorithms for large matrices via random
  projections.
\newblock In {\em Proceedings of the 47th Annual IEEE Symposium on Foundations
  of Computer Science}, FOCS '06, pages 143--152, Washington, DC, USA, 2006.
  IEEE Computer Society.

\bibitem[Ser10]{serre2010matrices}
D.~Serre.
\newblock {\em Matrices: Theory and Applications}.
\newblock Graduate Texts in Mathematics. Springer, 2010.

\bibitem[SZW{\etalchar{+}}97]{Santa97Gaussianregression}
Huaiyu~Zhu Santa, Huaiyu Zhu, Christopher K.~I. Williams, Richard Rohwer, and
  Michal Morciniec.
\newblock Gaussian regression and optimal finite dimensional linear models.
\newblock In {\em Neural Networks and Machine Learning}, pages 167--184.
  Springer-Verlag, 1997.

\bibitem[Ver18]{vershynin2018high}
Roman Vershynin.
\newblock {\em High-dimensional probability: An introduction with applications
  in data science}, volume~47.
\newblock Cambridge university press, 2018.

\bibitem[WGM17]{sketched-ridge-regression}
Shusen Wang, Alex Gittens, and Michael~W. Mahoney.
\newblock Sketched ridge regression: Optimization perspective, statistical
  perspective, and model averaging.
\newblock In Doina Precup and Yee~Whye Teh, editors, {\em Proceedings of the
  34th International Conference on Machine Learning}, volume~70 of {\em
  Proceedings of Machine Learning Research}, pages 3608--3616, International
  Convention Centre, Sydney, Australia, 06--11 Aug 2017. PMLR.

\bibitem[Woo14]{woodruff2014sketching}
David~P. Woodruff.
\newblock Sketching as a tool for numerical linear algebra.
\newblock {\em Foundations and Trends{\textregistered} in Theoretical Computer
  Science}, 10(1--2):1--157, 2014.

\bibitem[WRXM17]{distributed-newton}
Shusen Wang, Farbod Roosta{-}Khorasani, Peng Xu, and Michael~W. Mahoney.
\newblock {GIANT:} globally improved approximate {N}ewton method for
  distributed optimization.
\newblock {\em CoRR}, abs/1709.03528, 2017.

\bibitem[WS01]{Williams01Nystrom}
Christopher K.~I. Williams and Matthias Seeger.
\newblock Using the {N}ystr\"{o}m method to speed up kernel machines.
\newblock In T.~K. Leen, T.~G. Dietterich, and V.~Tresp, editors, {\em Advances
  in Neural Information Processing Systems 13}, pages 682--688. MIT Press,
  2001.

\bibitem[XRKM17]{XRM17_theory_TR}
P.~Xu, F.~Roosta-Khorasani, and M.~W. Mahoney.
\newblock Newton-type methods for non-convex optimization under inexact
  {H}essian information.
\newblock Technical report, 2017.
\newblock Preprint: arXiv:1708.07164.

\bibitem[YLDW20]{yang2020reduce}
Fan Yang, Sifan Liu, Edgar Dobriban, and David~P Woodruff.
\newblock How to reduce dimension with pca and random projections?
\newblock {\em arXiv preprint arXiv:2005.00511}, 2020.

\bibitem[YXRKM18]{YXRM18_TR}
Z.~Yao, P.~Xu, F.~Roosta-Khorasani, and M.~W. Mahoney.
\newblock Inexact non-convex {N}ewton-type methods.
\newblock Technical report, 2018.
\newblock Preprint: arXiv:1802.06925.

\bibitem[Zaj20]{zajkowski2020bounds}
Krzysztof Zajkowski.
\newblock Bounds on tail probabilities for quadratic forms in dependent
  sub-gaussian random variables.
\newblock {\em Statistics \& Probability Letters}, 167:108898, 2020.

\end{thebibliography}

\ifisarxiv\else\newpage\fi
\appendix

\section{Proof of Theorem~\ref{t:main-tech}}
\label{sec:proof-of-theo-main-tech}

We first introduce the following technical lemmas.

\begin{lemma}\label{l:rank-one-update}
    For $\X \in \mathbb R^{k \times n}$ with $k<n$, denote $\P = \X^\dagger \X$ and $\P_{-k} = \X_{-k}^\dagger \X_{-k}$, with $\X_{-i} \in \mathbb R^{(k-1) \times n}$ the matrix $\X$ without its i-th row $\x_i \in \mathbb R^n$. Then, conditioned on the event $E_k: \left\{ \left| \frac{\tr \Sigmab (\I - \P_{-k})}{\x_k^\top (\I - \P_{-k}) \x_k} - 1 \right| \le \frac12 \right\}$:
    \begin{align*}
      (\X^\top\X)^\dagger\x_k = \frac{(\I - \P_{-k}) \x_k}{\x_k^\top (\I - \P_{-k}) \x_k}, \quad \P - \P_{-k} = \frac{(\I - \P_{-k}) \x_k \x_k^\top (\I - \P_{-k}) }{\x_k^\top (\I - \P_{-k}) \x_k}.
    \end{align*}
\end{lemma}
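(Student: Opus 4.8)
The plan is to treat $\X^\top\X$ as a rank-one update of $\X_{-k}^\top\X_{-k}$ and exploit the geometry of the row space. First I would write
\[
\X^\top\X = \X_{-k}^\top\X_{-k} + \x_k\x_k^\top,
\]
and set $\u := (\I-\P_{-k})\x_k$, the component of the new row $\x_k$ orthogonal to $\mathrm{range}(\X_{-k}^\top)$, i.e. the row space of $\X_{-k}$. Since $\I-\P_{-k}$ is an orthogonal projection it is idempotent and symmetric, so $\norm{\u}^2 = \x_k^\top(\I-\P_{-k})\x_k$, which is exactly the denominator appearing in both claimed identities. On $E_k$ the ratio of $\tr\Sigmab(\I-\P_{-k})$ to this quantity lies in $[\tfrac12,\tfrac32]$, which is only well-defined if the denominator is nonzero; being a quadratic form of a positive semidefinite projection it is then strictly positive, so $\u\neq\zero$ and $\x_k\notin$ row space of $\X_{-k}$. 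Thus dividing by the denominator is legitimate. (Both formulas are in fact deterministic identities valid whenever this denominator is nonzero; $E_k$ merely guarantees that.)

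For the first identity I would invoke the standard characterization of the pseudoinverse of a symmetric positive semidefinite matrix $\B$: if $\b\in\mathrm{range}(\B)$, then $\B^\dagger\b$ is the unique vector in $\mathrm{range}(\B)$ solving $\B\v=\b$. Here $\B=\X^\top\X$ and $\b=\x_k$, and $\x_k\in\mathrm{range}(\X^\top)=\mathrm{range}(\X^\top\X)$ because $\x_k$ is a row of $\X$. It then suffices to check that $\v=\u/\norm{\u}^2$ lies in $\mathrm{range}(\X^\top\X)$ and satisfies $(\X^\top\X)\v=\x_k$. The former holds since $\u=\x_k-\P_{-k}\x_k$ is a combination of rows of $\X$. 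For the latter I would compute $(\X^\top\X)\u=\X_{-k}^\top\X_{-k}\u+\x_k(\x_k^\top\u)$ and use the pseudoinverse identity $\X_{-k}(\I-\P_{-k})=\X_{-k}-\X_{-k}\X_{-k}^\dagger\X_{-k}=\zero$, so that $\X_{-k}^\top\X_{-k}\u=\X_{-k}^\top\big(\X_{-k}(\I-\P_{-k})\x_k\big)=\zero$, together with $\x_k^\top\u=\norm{\u}^2$. This gives $(\X^\top\X)\u=\norm{\u}^2\x_k$, and dividing by $\norm{\u}^2$ yields the claim.

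For the second identity I would argue geometrically. The matrix $\P=\X^\dagger\X$ is the orthogonal projection onto the row space of $\X$, which equals $\mathrm{range}(\X_{-k}^\top)+\Span\{\x_k\}$. Writing $\x_k=\P_{-k}\x_k+\u$ with $\P_{-k}\x_k\in\mathrm{range}(\X_{-k}^\top)$, this sum equals $\mathrm{range}(\X_{-k}^\top)\oplus\Span\{\u\}$, an orthogonal direct sum because $\u\perp\mathrm{range}(\X_{-k}^\top)$ and $\u\neq\zero$. Since projection onto an orthogonal direct sum is the sum of the projections onto each summand,
\[
\P=\P_{-k}+\frac{\u\u^\top}{\norm{\u}^2}
=\P_{-k}+\frac{(\I-\P_{-k})\x_k\x_k^\top(\I-\P_{-k})}{\x_k^\top(\I-\P_{-k})\x_k},
\]
where $\u\u^\top=(\I-\P_{-k})\x_k\x_k^\top(\I-\P_{-k})$ by symmetry of $\I-\P_{-k}$. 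Rearranging gives the stated formula for $\P-\P_{-k}$.

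The hard part, such as it is, is not any single computation but the bookkeeping around degeneracy and ranges: ensuring $\u\neq\zero$ (which is precisely where $E_k$ enters), confirming $\x_k\in\mathrm{range}(\X^\top\X)$ so that the pseudoinverse characterization applies, and correctly using $\X_{-k}(\I-\P_{-k})=\zero$. Once these are in place, both identities follow from elementary projection algebra.
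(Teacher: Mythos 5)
Your proof is correct, but it takes a genuinely different route from the paper's. The paper proves the first identity by invoking an external result (Meyer's rank-one update theorem for the Moore--Penrose pseudoinverse): it writes $\X^\top\X = \A + \x_k\x_k^\top$ with $\A = \X_{-k}^\top\X_{-k}$, quotes the explicit formula for $(\A+\x_k\x_k^\top)^\dagger$ valid when $\x_k^\top(\I-\P_{-k})\x_k \neq 0$, and multiplies by $\x_k$ to simplify; the second identity then follows algebraically from $\P = (\X^\top\X)^\dagger\X^\top\X$ together with $\A(\I-\P_{-k})=\zero$. You instead give a self-contained argument: for the first identity you verify directly that the candidate vector $\u/\|\u\|^2$ lies in $\mathrm{range}(\X^\top\X)$ and solves $(\X^\top\X)\v=\x_k$, then invoke the uniqueness of the pseudoinverse solution within the range of a symmetric p.s.d.\ matrix; for the second you use the geometric fact that $\X^\dagger\X$ is the orthogonal projector onto the row space of $\X$, decompose that row space as the orthogonal direct sum $\mathrm{range}(\X_{-k}^\top)\oplus\Span\{\u\}$, and add the projectors. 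What the paper's route buys is brevity on the page, at the cost of leaning on a nontrivial cited formula (which itself splits into cases according to whether $\x_k$ lies in $\mathrm{range}(\A)$, so one must still check that the event $E_k$ places us in the right case). What your route buys is transparency: every step is elementary projection algebra, the role of $E_k$ as nothing more than a guarantee that $\u\neq\zero$ is made explicit, and the second identity acquires a clear geometric meaning rather than emerging from symbol manipulation. Both proofs correctly observe that the identities are deterministic facts holding whenever the denominator is nonzero, with $E_k$ only ensuring non-degeneracy.
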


\begin{proof}
% Recall $\P = \I - \X^\dagger \X = \I - (\X^\top \X)^\dagger \X^\top \X = \I - (\X^\top \X)^\dagger (\A + \x_k \x_k^\top)$ and $\P_{-k} = \I - \X_{-k}^\dagger \X_{-k} = \I - \A^\dagger \A$ 
Since conditioned on $E_k$ we have $\x_k^\top (\I - \P_{-k}) \x_k \neq 0$, from \cite[Theorem 1]{10.2307/2099767} we deduce
\begin{align*}
  (\X^\top \X)^\dag = \left( \A + \x_k \x_k^\top \right)^\dagger
  &= \A^\dag
  - \frac{\A^\dag \x_k \x_k^\top (\I - \P_{-k}) }{\x_k^\top ( \I - \P_{-k}) \x_k}
  - \frac{( \I -\P_{-k}) \x_k \x_k^\top \A^\dag}{\x_k^\top ( \I - \P_{-k}) \x_k} \\ 
  &+ (1 + \x_k^\top \A^\dag \x_k) \frac{(\I - \P_{-k}) \x_k \x_k^\top (\I - \P_{-k})}{(\x_k^\top (\I - \P_{-k}) \x_k)^2}
\end{align*}
for $\A = \X_{-k}^\top \X_{-k}$ so that $\I - \P_{-k} = \I - \A^\dagger \A$, where we used the fact that $\I - \P_{-k}$ is a projection matrix so that $(\I - \P_{-k})^2 = \I - \P_{-k}$. As a consequence, multiplying by $\x_k$ and simplifying we get
\[
  (\X^\top \X)^\dag \x_k = \frac{ ( \I -\P_{-k}) \x_k}{\x_k^\top (\I - \P_{-k}) \x_k}.
\]
By definition of the pseudoinverse, $\P = \X^\dagger \X = (\X^\top \X)^\dagger \X^\top \X$ so that 
\[
  \P - \P_{-k} = \X^\dagger \X - \X_{-k}^\dagger \X_{-k} = \frac{( \I - \P_{-k}) \x_k \x_k^\top ( \I - \P_{-k})}{\x_k^\top ( \I - \P_{-k}) \x_k}
\]
where we used $\A (\I - \P_{-k}) = \A - \A \A^\dagger \A = 0$ and thus the conclusion.
\end{proof}

\begin{lemma}
\label{l:concentration-sub-gaussian}
If random vector $\x \in \mathbb R^n$, with $\E[\x] = 0$ and $\E[\x
\x^\top] = \I_n$, satisfies the Hanson-Wright inequality with constant
$K\geq 1$ (Definition \ref{d:hanson-wright}), then for any positive
semi-definite matrix $\A \in \mathbb R^{n \times n}$, we have 
\[
  \Pr\left[
    \lvert \x^\top \A \x - \tr \A \rvert
    \geq \frac13 \tr \A
    \right] \leq 2 \exp\left( - \frac{r_{\!\A}}{C K^4} \right)
\]
with $r_{\!\A} = \tr \A/\| \A \|$ the stable rank of $\A$, and 
\begin{align*}
  \E\left[ \left( \x^\top \A \x - \tr \A \right)^2 \right]
  &\leq c~K^4~\tr \A^2
\end{align*}
for some $C,c > 0$ independent of $K$. Also, for any unit vector
$\v\in\R^n$, variable $\v^\top\x$ is $CK$-sub-gaussian.
\end{lemma}

\begin{proof}
  Recall that, for a psd matrix $\A$, the Hanson-Wright inequality can
  be stated as follows:
\[
  \Pr \left\{ \left| \x^\top \A \x - \tr \A  \right| \ge t \right\} \le 2 \exp \left( - \min \left\{ \frac{t^2}{ K^4 \tr\A^2 },  \frac{t}{K^2 \|\A\|  } \right\} \right).
\]
Taking $t = \frac13 \tr \A$ we have
\[
  \frac{t^2}{K^4 \tr\A^2 } = \frac{ (\tr \A)^2 }{9 K^4 \tr\A^2 } \ge
  \frac{ \tr \A }{9 K^4 \| \A \| } = \frac{r_{\!\A}}{9 K^4},
  \qquad \frac{t}{K^2 \|\A\|  } \ge \frac{r_{\!\A}}{3 K^2},
\]
where we use the fact that $\tr \A^2 \le \| \A \| \tr \A$.

Integrating this bound yields:
\[
  \E\left[ (\x^\top \A \x - \tr \A )^2 \right] \le c~K^4~\tr \A^2.
\]
Finally, to obtain the sub-gaussianity of $\v^\top\x$, we apply the Hanson-Wright inequality to $\A=\v\v^\top$.
\end{proof}

\begin{lemma}\label{l:trace}
With the notations of Lemma~\ref{l:rank-one-update}, for $X=\tr\,\Sigmab(\P_{-k}-\E[\P_{-k}])$
and $\|\Sigmab\| = 1$, we have
\begin{align*}
  \E[X^2]\leq
  Ck\quad\text{and}\quad\Pr\{|X|\geq t\}\leq 2 \ee^{-\frac{t^2}{ck}}.
  \end{align*}
  for some universal constant $C,c > 0$.
\end{lemma}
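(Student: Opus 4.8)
The plan is to treat $X$ as a centered function of the $k-1$ independent rows $\x_1,\dots,\x_{k-1}$ that determine $\P_{-k}$, and to derive both claims from a single bounded-differences estimate. Writing $f(\x_1,\dots,\x_{k-1}) = \tr\,\Sigmab\P_{-k}$, so that $X = f - \E[f]$, the heart of the matter is to control how much $f$ can move when one row $\x_j$ is replaced by an independent copy $\x_j'$, uniformly over the values of the other rows. Once that sensitivity bound is in place, the tail bound comes from McDiarmid's inequality and the second-moment bound from Efron--Stein.

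To establish the sensitivity bound I would fix all rows except $\x_j$ (with $j\in\{1,\dots,k-1\}$) and introduce the leave-two-out projection $\P_{-k,-j}$ onto the span of $\{\x_i : i\neq j,\,k\}$. Adding $\x_j$ back is a rank-one update: by the same computation as in Lemma~\ref{l:rank-one-update}, whenever $\x_j$ is not already in that span we have $\P_{-k}-\P_{-k,-j} = \u\u^\top$ with $\u = (\I-\P_{-k,-j})\x_j/\|(\I-\P_{-k,-j})\x_j\|$ a unit vector, while if $\x_j$ lies in the span the difference is $0$. In either case $0\preceq\P_{-k}-\P_{-k,-j}$ is a projection of rank at most one, so
\[
0 \ \le\ \tr\,\Sigmab(\P_{-k}-\P_{-k,-j}) \ =\ \u^\top\Sigmab\,\u \ \le\ \|\Sigmab\| \ =\ 1 .
\]
Comparing the two configurations through the common value $\tr\,\Sigmab\P_{-k,-j}$, the replacement $\x_j\mapsto\x_j'$ alters $f$ by the difference of two numbers each lying in $[0,1]$, hence by at most $1$ in absolute value. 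This gives the bounded-differences property with constants $c_j=1$ for every $j$.

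With this in hand the two conclusions are routine. McDiarmid's inequality, applied to the independent rows, yields $\Pr\{|X|\ge t\}\le 2\exp\!\big(-2t^2/(k-1)\big)\le 2\,\ee^{-t^2/(ck)}$ for a universal $c$, which is the claimed sub-gaussian tail. The variance bound then follows either by integrating this tail, $\E[X^2]=\int_0^\infty\Pr\{|X|>\sqrt s\}\,ds\le Ck$, or directly from Efron--Stein, which bounds $\E[X^2]=\Var(X)\le\tfrac12\sum_j\E[(f-f_j')^2]\le (k-1)/2$ using the same per-row difference estimate.

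The main obstacle is the sensitivity estimate of the second paragraph, and the point worth emphasizing is that it cannot come from any Lipschitz control on $\P_{-k}$ itself, since a small perturbation of a spanning row can change the projection drastically in operator norm. What rescues the argument is that the projection enters only through its trace against the bounded matrix $\Sigmab$, and the incremental rank-one piece contributes at most $\u^\top\Sigmab\,\u\le\|\Sigmab\|=1$ no matter how the underlying subspace tilts. I note in passing that this route uses only independence of the rows and the normalization $\|\Sigmab\|=1$; sub-gaussianity of the rows is not needed for this particular lemma.
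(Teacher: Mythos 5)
Your proof is correct, and it rests on exactly the same key observation as the paper's: adding or removing a single row changes $\tr\,\Sigmab\P_{-k}$ by a rank-one projection increment whose $\Sigmab$-trace lies in $[0,1]$, by the rank-one update formula of Lemma~\ref{l:rank-one-update} together with $\|\Sigmab\|=1$. Where you differ is in the concentration machinery wrapped around that observation. The paper forms the explicit Doob martingale $X=\sum_{i=1}^{k}(\E_i-\E_{i-1})\,\tr\,\Sigmab(\P-\P_{-i})$, with $\E_i$ the conditional expectation given the first $i$ rows, bounds each increment almost surely, and then invokes Burkholder's inequality (at $L=2$) for $\E[X^2]\le Ck$ and Azuma--Hoeffding for the tail. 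You instead phrase the same per-row sensitivity as a bounded-differences property under replacement of one independent row --- comparing the two configurations through the common leave-two-out projection $\P_{-k,-j}$, so that the difference is a difference of two numbers in $[0,1]$ --- and then quote McDiarmid for the tail and Efron--Stein (or tail integration) for the second moment. Since McDiarmid's inequality is precisely Azuma--Hoeffding applied to the Doob martingale, the two arguments are near-equivalent in substance, but yours is somewhat more economical: Efron--Stein replaces Burkholder (which at $L=2$ reduces to orthogonality of martingale increments anyway), your replacement sensitivity constant is $1$ rather than the increment bound of $2$ used in the paper, giving a marginally better tail exponent, and your approach avoids having to exhibit the martingale structure explicitly. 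Your closing remark is also accurate and matches the paper: this lemma uses only independence of the rows and the normalization $\|\Sigmab\|=1$; sub-gaussianity plays no role here, entering only in the other concentration lemma (Lemma~\ref{l:concentration-sub-gaussian}).
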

\begin{proof}
To simplify notations, we work on $\P$ instead of $\P_{-k}$, the same line of argument applies to $\P_{-k}$ by changing the sample size $k$ to $k-1$.

First note that
\begin{align*}
  X
  &= \tr \Sigmab (\P - \E \P)
  = \E_k [\tr \Sigmab \P]  - \E_0[ \tr \Sigmab \P ]\\
  %%%
  &= \sum_{i=1}^k \left( \E_i[\tr \Sigmab \P] - \E_{i-1}[\tr \Sigmab \P]\right)
  = \sum_{i=1}^k (\E_i - \E_{i-1}) \tr \Sigmab (\P - \P_{-i})
\end{align*}
where we used the fact that $\E_i [\tr \Sigmab \P_{-i}] = \E_{i-1} [\tr \Sigmab \P_{-i}]$, for $\E_i[\cdot]$ the conditional expectation with respect to $\mathcal F_i$ the $\sigma$-field generating the rows $\x_1 \ldots, \x_i$ of $\X$. This forms a martingale difference sequence (it is a difference sequence of
the Doob martingale for $\tr \Sigmab (\P - \P_{-i})$ with respect to filtration $\mathcal{F}_i$)
hence it falls within the scope of the Burkholder inequality \cite{burkholder1973distribution}, recalled as follows.

\begin{lemma}\label{l:burkholder}
For $\{ x_i \}_{i=1}^k$ a real martingale difference sequence with respect to the increasing $\sigma$ field $\mathcal F_i$, we have, for $L > 1$, there exists $C_L > 0$ such that
\[
  \E \bigg[
    \Big| \sum_{i=1}^k x_i \Big|^L
  \bigg]
  \le C_L \E \bigg[
    \Big( \sum_{i=1}^k |x_i|^2 \Big)^{L/2}
  \bigg].
\]
\end{lemma}

From Lemma~\ref{l:rank-one-update}, $\P - \P_{-i} = \frac{(\I - \P_{-i}) \x_i \x_i^\top (\I - \P_{-i}) }{ \x_i^\top (\I - \P_{-i}) \x_i}$ is positive semi-definite, we have $\tr \Sigmab (\P - \P_{-i}) \le \| \Sigmab \| = 1$ so that with Lemma~\ref{l:burkholder} we obtain with $x_i = (\E_i - \E_{i-1}) \tr \Sigmab (\P - \P_{-i})$ that, for $L > 1$
\[
  \E |X|^L \le C_L k^{L/2}.
\]
In particular, for $L=2$, we obtain $\E |X|^2 \le C k$.
%for $c = (36 \sqrt{2})^2$ (details in proof of Theorem~3.2 in \cite{burkholder1973distribution}).

For the second result, since we have almost surely bounded martingale differences
($\lvert x_i \rvert \leq 2$), by the Azuma-Hoeffding inequality
\begin{align*}
  \Pr\{
    \lvert X \rvert
    \geq t
  \}
  \leq 2 \ee^{\frac{-t^2}{8 k}}
\end{align*}
as desired.
\end{proof}

\subsection{Complete proof of Theorem~\ref{t:main-tech}}

Equipped with the lemmas above, we are ready to prove Theorem~\ref{t:main-tech}. First note that:
\begin{enumerate}[leftmargin=*]
  \item  Since $\X^\dag \X \overset{d}{=} (\alpha \X)^\dag (\alpha \X)$ for any $\alpha \in \R \setminus \{0\}$, we can assume without loss of generality (after rescaling $\bar\P_\perp$ correspondingly) that $\|\Sigmab\| = 1$.
  \item According to the definition of $\bar \P_\perp$ and $\gamma$, the following bounds hold
  \begin{equation}\label{eq:bound-bar-gamma-P}
    \frac1{\gamma + 1} \I \preceq \bar \P_\perp \preceq \I, \quad \gamma \le \frac{k}{r-k} = \frac1{\rho - 1}
  \end{equation}
  for $r \equiv \frac{\tr \Sigmab}{ \| \Sigmab \| } = \tr \Sigmab$ and $\rho \equiv \frac{r}k > 1$, where we used the fact that
  \[
    k = n - \tr\,\bar\P_\perp = \tr\,\bar\P_\perp (\gamma\Sigmab + \I) - \tr\,\bar\P_\perp
  = \gamma \tr\,\bar\P_\perp \Sigmab\geq \frac{\gamma}{\gamma+1}\tr\,\Sigmab,
  \]
  so that $r = \tr \Sigmab \le k \cdot \frac{\gamma + 1}{\gamma}$.%= k + \tr \bar \P_\perp \Sigmab
  \item As already discussed in Section~\ref{subsec:proof-sketch}, to obtain the lower and upper bound for $\E[\P_\perp]$ in the sense of symmetric matrix as in Theorem~\ref{t:main-tech}, it suffices to bound the following spectral norm
  \begin{equation}\label{eq:proof-1-SM}
    \| \I - \E[\P_\perp] \bar \P_\perp^{-1} \| \le \frac{C_\rho}{\sqrt r}, %\quad~\text{or}~\| \I - \bar \P_\perp^{-1} \E[\P_\perp] \| \le \frac{C_\rho}{\sqrt r}
  \end{equation}
  so that, with $\frac{\rho-1}{\rho}\I \preceq \bar\P_\perp \preceq \I$ from \eqref{eq:bound-bar-gamma-P}, we have
  \begin{align*}
    \|\I-\bar\P_\perp^{-\frac12}\E[\P_\perp]\bar\P_\perp^{-\frac12}\| =
    %\|\bar\P_\perp^{\frac12}(\I-\bar\P_\perp^{-1}\E[\P_\perp])\bar\P_\perp^{-\frac12}\| =
    \|\bar\P_\perp^{-\frac12}(\I-\E[\P_\perp]\bar\P_\perp^{-1})\bar\P_\perp^{\frac12}\|\leq
  \frac{C_\rho}{\sqrt r} \sqrt{\frac{\rho}{\rho-1}}.
  \end{align*}
  Defining $\epsilon =\frac{C_\rho}{\sqrt r}
  \sqrt{\frac{\rho}{\rho-1}}$, this means that all eigenvalues 
  of the p.s.d.~matrix $\bar\P_\perp^{-\frac12}\E[\P_\perp]\bar\P_\perp^{-\frac12}$ lie
  in the interval $[1-\epsilon,1+\epsilon]$, and
  \begin{align*}
    (1-\epsilon)\I\preceq
    \bar\P_\perp^{-\frac12}\E[\P_\perp]\bar\P_\perp^{-\frac12}\preceq (1+\epsilon)\I.
  \end{align*}
  so that by multiplying $\bar \P_{\perp}^{\frac12}$ on both sides, we obtain the
  desired bound.
\end{enumerate}

\bigskip

As a consequence of the above observations, we only need to prove \eqref{eq:proof-1-SM} under the setting $\| \Sigmab \| = 1$. The proof comes in the following two steps:

\begin{enumerate}[leftmargin=*]
  \item For $\P_{-i} = \X_{-i}^\dagger \X_{-i}$, with $\X_{-i} \in \mathbb R^{(k-1) \times n}$ the matrix $\X$ without its $i$-th row, we define, for $i \in \{ 1, \ldots, k \}$, the following events
  \begin{equation}\label{eq:def-events-E-F}
    E_i: \left\{ \left| \frac{\tr (\I - \P_{-i}) \Sigmab}{\x_i^\top ( \I - \P_{-i}) \x_i} - 1 \right| \le \frac12 \right\}, 
    %\quad F_i : \left\{ \left|\frac{\tr \Sigmab}{\x_i^\top \x_i} - 1 \right| \le \frac12  \right\}. %F_i : \left\{ \frac{\tr \Sigmab}{\x_i^\top \x_i} \le 2 \right\}
  \end{equation}
  where we recall $\x_i \in \R^n$ is the $i$-th row of $\X$ so that $\E[\x_i] = 0$ and $\E[\x_i \x_i^\top] = \Sigmab$. With Lemma~\ref{l:concentration-sub-gaussian}, we can bound the probability of $\neg E_i$, and consequently that of $\neg E$ for $E = \bigwedge_{i=1}^k E_i$;
  \item We then bound, conditioned on $E$ and $\neg E$ respectively, the spectral norm $\| \I - \E[\P_\perp] \bar \P_\perp^{-1} \|$. More precisely, since
\begin{align*}
  \I-\E[\P_\perp]\bar\P_\perp^{-1}\!
    &= \E[\P] - \gamma\E[\P_\perp]\Sigmab
= \E[\P \cdot \one_E] +
    \E[\P  \cdot \one_{\neg E}]
    -\gamma\E[\P_\perp]\Sigmab\\
  &=k\,\E\bigg[\frac{(\I - \P_{-k})\x_k\x_k^\top}{\x_k^\top(\I - \P_{-k})\x_k} \cdot \one_E\bigg]
    -\gamma\E[\P_\perp]\Sigmab \ +\     \E[\P \cdot \one_{\neg E}]\\
  &=\gamma\underbrace{\E\bigg[(\bar s-\hat
    s)\,\frac{(\I -\P_{-k})\x_k\x_k^\top}{\x_k^\top(\I -\P_{-k})\x_k} \cdot \one_E\bigg]}_{\T_1}
    - \gamma \underbrace{\E[ (\I - \P_{-k}) \x_k\x_k^\top \cdot \one_{\neg E}]}_{\T_2} 
    + \gamma \underbrace{\E[\P-\P_{-k}]\Sigmab}_{\T_3} + \underbrace{\E[\P \cdot \one_{\neg E}]}_{\T_4},
    %\gamma \underbrace{\E[\P_{-k}\x_k\x_k^\top\one_E] - \E[\P_{-k}]\Sigmab}_{\T_2}
\end{align*}
where we used Lemma~\ref{l:rank-one-update} for the third equality and denote $\hat s = \x_k^\top ( \I - \P_{-k}) \x_k$ as well as $\bar s = \tr \bar \P_\perp \Sigmab =k/\gamma$. It then remains to bound the spectral norms of $ \T_1, \T_2, \T_3 ,\T_4$ to reach the conclusion.
\end{enumerate}

Another important relation that will be constantly used throughout the proof is
\begin{equation}\label{eq:bound-trace-P-Sigma}
    \tr (\I - \P_{-k}) \Sigmab = \tr \Sigmab^{\frac12} (\I - \P_{-k})^2 \Sigmab^{\frac12} =
    \|\Sigmab^{\frac12}-\Sigmab^{\frac12}\X_{-k}^\dagger\X_{-k}\|_F^2\geq
\sum_{i\geq k}\lambda_i (\Sigmab)\geq r-k
\end{equation}
where we used the fact that $\rank(\X_{-k}^\dagger \X_{-k}) \le \rank(\X_{-k}) \le k-1$ and arranged the eigenvalues $ 1 = \lambda_1(\Sigmab) \ge \ldots \ge \lambda_n(\Sigmab)$ in a non-increasing order. As a consequence, we also have
\begin{equation}\label{eq:bound-norm-P-Sigma}
  \frac{\tr (\I - \P_{-k}) \Sigmab}{ \| \Sigmab^{\frac12} (\I - \P_{-k}) \Sigmab^{\frac12} \| } \ge \tr (\I - \P_{-k}) \Sigmab \geq r-k.
\end{equation}

\bigskip

  For the first step, assuming without loss of generality that that the
Hanson-Wright constant satisfies $K\geq 1$ and using
Lemma~\ref{l:concentration-sub-gaussian} together with
\eqref{eq:bound-norm-P-Sigma}, we have:
\begin{align*}
  \Pr (\neg E_i) &\le \Pr \left\{ \lvert \x_i^\top (\I - \P_{-i}) \x_i - \tr \Sigmab (\I - \P_{-i})  \rvert
    \geq \frac13 \tr \Sigmab (\I - \P_{-i}) \right\} \le 2 \ee^{ -\frac{ r -k }{C K^4}},
\end{align*}
so that with the union bound we obtain
\begin{equation}\label{eq:union-bound-neg-E}
  \Pr (\neg E) \le 2k \ee^{ -\frac{ r -k }{C K^4}} \le \frac{k}{(r-k)^2} \cdot 2 (r-k)^2 \ee^{ - \frac{ r -k }{C K^4} } \le \frac{C_\rho}{r - k},
\end{equation}
where we used the fact that, for $\alpha,x >0$, we have $x^2 \ee^{-\alpha x} \le \frac{4 \ee^{-2}}{\alpha^2}$. Also, denoting $c_\rho = \frac{r - k}r = \frac{\rho - 1}{\rho} > 0$, we have
\begin{equation}\label{eq:proba-E}
  \Pr(\neg E) \le \frac{C_\rho}{r - k} = \frac{C_\rho}{c_\rho r} = \frac{C'_\rho}{r} 
\end{equation}
for some $C'_\rho > 0$ that depends on $\rho = r/k > 1$ and the
Hanson-Wright constant $K$.

\medskip

At this point, note that, conditioned on the event $E$, we have for $i \in \{1, \ldots, k \}$
\begin{equation}
    \frac12 \frac1{\tr (\I - \P_{-i}) \Sigmab} \le \frac1{\x_i^\top (\I - \P_{-i}) \x_i} \le \frac32 \frac1{\tr (\I - \P_{-i}) \Sigmab}, %\quad  \frac1{2r} \le \frac1{\x_i^\top \x_i} \le \frac3{2r}. %\frac1{\x_i^\top \x_i} \le \frac2{\tr \Sigmab} = \frac2{r}
\end{equation}
Also, with \eqref{eq:proba-E} and the fact that $\| \P \| \le 1$, we have $ \| \T_4 \| \le
\frac{C_\rho}{r}$ for some $C_\rho > 0$ that depends on $\rho$ and $K$. To handle non-symmetric matrix $\T_2$, note that $\T_2 + \T_2^\top$ is symmetric and
\begin{equation}
  - \E[ (\I - \P_{-k}) \cdot \one_{\neg E}] - \E[ (\x_k^\top \x_k) \x_k\x_k^\top \cdot \one_{\neg E}] \preceq \T_2 + \T_2^\top \preceq \E[ (\I - \P_{-k}) \cdot \one_{\neg E}] + \E[ (\x_k^\top \x_k) \x_k\x_k^\top \cdot \one_{\neg E}]
\end{equation}
with $-(\A \A^\top + \B \B^\top) \preceq \A \B^\top + \B \A^\top \preceq \A \A^\top + \B \B^\top$. To obtain an upper bound for operator norm of $\E[ (\x_k^\top \x_k) \x_k\x_k^\top \cdot \one_{\neg E}]$, note that 
\begin{align*}
  \| \E[ (\x_k^\top \x_k) \x_k\x_k^\top \cdot \one_{\neg E}] \| &\le \E[ (\x_k^\top \x_k)^2 \cdot \one_{\neg E}] = \int_0^\infty \Pr ( \x^\top \x \cdot \one_{\neg E} \ge \sqrt t ) dt \\ 
  &\le \int_0^X \Pr ( \x^\top \x \cdot \one_{\neg E} \ge \sqrt t ) dt + \int_X^\infty \Pr ( \x^\top \x \ge \sqrt t ) dt  \\ 
  &\le X \cdot \Pr(\neg E) + \int_X^\infty \ee^{- \min \left\{ \frac{t}{C^2 K^4 r}, \frac{\sqrt t}{CK^2 \sqrt r} \right\} } dt \le \frac{c_\rho}r
\end{align*}
where we recall $\E[\x^\top \x] = \tr \Sigmab = r$ and take $X \ge C^2 K^4 r$, the third line follows from the proof of Lemma~\ref{l:concentration-sub-gaussian} and the forth line from the same argument as in \eqref{eq:union-bound-neg-E}. Moreover, since $\| \T_2 \| \le \| \T_2 + \T_2^\top \|$ (see for example \cite[Proposition~5.11]{serre2010matrices}), we conclude that $\| \T_2 \| \le \frac{C_\rho}r$.

And it thus remains to handle the terms $\T_1$ and $\T_3$ to obtain a bound on $\| \I - \E[\P_\perp] \bar \P_\perp^{-1} \| $. 

To bound $\T_3$, with $\P - \P_{-k} = \frac{(\I - \P_{-k})\x_k\x_k^\top (\I - \P_{-k})}{\x_k^\top (\I - \P_{-k}) \x_k}$ in Lemma~\ref{l:rank-one-update}, we have
\begin{align*}
\|\T_3\| &\leq  \bigg\|
  \E\bigg[\frac{ (\I - \P_{-k}) \x_k\x_k^\top (\I - \P_{-k})}{\x_k^\top (\I - \P_{-k}) \x_k}
  \cdot \one_E \bigg]\bigg\|  + \| \E[ (\P - \P_{-k}) \cdot \one_{\neg E} ] \| \\ 
  %%%
  &\le \frac32 \E \left[ \frac1{ \tr (\I - \P_{-k} ) \Sigmab } \right] + \frac{c_\rho}{r - k} \leq \frac{C_\rho}{r-k} = \frac{C'_\rho}{r}
\end{align*}
where we used the fact that $\tr\,(\I - \P_{-k}) \Sigmab \ge r -k $ from \eqref{eq:bound-trace-P-Sigma} and recall $\rho \equiv r/k > 1$.

For $\T_1$ we write
\begin{align*}
  \|\T_1\|    &\leq\E\bigg[\|\I - \P_{-k}\|\cdot \Big\|\E\Big[|\bar s-\hat s|\cdot
    \frac{\x_k\x_k^\top}{\x_k^\top( \I - \P_{-k}) \x_k} \cdot \one_E\mid\P_{-k}\Big]
    \Big\|\bigg]
  \\
          &\leq \frac32 \frac1{r-k} \cdot \E\bigg[\sup_{\| \v \| = 1}\E\Big[
            |\bar s-\hat s|\cdot \v^\top\x_k\x_k^\top\v\cdot\one_E\mid
            \P_{-k}\Big]\bigg]
  \\
  &\leq \frac{C_\rho}{r}\cdot \E\bigg[\underbrace{\sqrt{\E\big[(\bar s-\hat
    s)^2\cdot \one_E \mid\P_{-k}\big]}}_{ T_{1,1} }\cdot
    \underbrace{\sup_{ \| \v \| = 1 }\sqrt{\E\big[(\v^\top\x_k)^4\big]}}_{ T_{1,2} }\bigg]
\end{align*}
where we used Jensen's inequality for the first inequality, the relation in \eqref{eq:bound-trace-P-Sigma} for the second inequality, and Cauchy–Schwarz for the third inequality.

  We first bound $T_{1,2}$ by definition of sub-gaussian random
vectors. From Lemma \ref{l:concentration-sub-gaussian}, for any unit vector $\v$, we have that
$\v^\top \x_k$ is a $O(K)$-sub-gaussian random variable. As such,
$T_{1,2} \le C K^2$ for some absolute constant $C > 0$, see for example
\cite[Section~2.5.2]{vershynin2018high}. 

For $T_{1,1}$ we have
\begin{align*}
  \sqrt{ \E \big[(\bar s-\hat s)^2 \cdot\one_E \mid\P_{-k}\big] } = \sqrt{ (\bar s - s)^2 + \E\big[(s - \hat s)^2 \cdot\one_E\big] }
\end{align*}
where we denote $s=\E[\hat s]=\tr\,\E[\I - \P_{-k}]\Sigmab$. Note that
\begin{align*}
  \E\big[(s - \hat s)^2 \big] &=
\E\big[\big(\tr\,\Sigmab(\P_{-k} - \E[\P_{-k}])\big)^2  \big] +
  \E\big[(\tr\,(\I - \P_{-k})\Sigmab-\x_k^\top(\I - \P_{-k})\x_k)^2  \big]\\
                              &\leq C_1 k +  C_2 \E\big[\tr\,(\Sigmab - \P_{-k}\Sigmab)^2 \big] \\
                              &\le C(k+s)\\
                              &\leq C\big(k+\bar s + |s- \bar s| \big)
\end{align*}
where we used Lemma~\ref{l:trace} and Lemma~\ref{l:concentration-sub-gaussian}. Recall that $\bar s = \tr \bar \P_\perp \Sigmab \le \tr \Sigmab = r$ and $k < r$, we have
\begin{equation}\label{eq:bound-T11}
  T_{1,1} \le \sqrt{ (\bar s - s)^2 + C \big(|\bar s - s| + 2 r\big) }.
\end{equation}

It remains to bound $|\bar s - s|$. Note that $\P=(\X^\top\X)^\dagger\X^\top\X = \X^\top\X
(\X^\top\X)^\dagger$ and is symmetric, so
\begin{align*}
  &\I - \E[\P_\perp]\bar\P_\perp^{-1} + \I - \bar\P_\perp^{-1} \E[\P_\perp]
  = 2\E[\P] - \E[\gamma\P_\perp\Sigmab] - \E[\gamma\Sigmab\P_\perp]\\
  &= \sum_{i=1}^k\E\big[(\X^\top\X)^\dagger\x_i\x_i^\top + \x_i\x_i^\top (\X^\top\X)^\dagger \big] - \gamma (\E[\P_\perp]\Sigmab + \Sigmab \E[\P_\perp])\\
  %&=k\,\E\bigg[\frac{\P_{-k}\x_k\x_k^\top + \x_k\x_k^\top \P_{-k}}{\x_k^\top\P_{-k}\x_k}\bigg] - \gamma (\E[\P]\Sigmab + \Sigmab \E[\P])\\
  &=\gamma\,
    \E\bigg[\bar s\cdot\frac{(\I-\P_{-k})\x_k\x_k^\top + \x_k\x_k^\top (\I-\P_{-k})} {\x_k^\top(\I - \P_{-k}) \x_k}\bigg]
    -\gamma\,\E\bigg[\hat s
    \cdot\frac{(\I-\P_{-k})\x_k\x_k^\top + \x_k\x_k^\top (\I-\P_{-k})}{\x_k^\top(\I-\P_{-k})\x_k}\bigg] \\ 
  & \qquad+\gamma\,\Big(\E\big[(\I - \P_{-k})\Sigmab\big] + \E\big[\Sigmab ( \I -\P_{-k}) \big]\Big)- \gamma \big(\E[\P_\perp]\Sigmab + \Sigmab \E[\P_\perp]\big) \\
  &=\gamma\,\E\bigg[(\bar s - \hat s)
    \cdot\frac{(\I-\P_{-k})\x_k\x_k^\top + \x_k\x_k^\top (\I-\P_{-k})}{\x_k^\top(\I-\P_{-k})\x_k}\bigg]
+    \gamma (\E[\P-\P_{-k}]\Sigmab + \Sigmab \E[\P-\P_{-k}]). 
\end{align*}
%where we used the fact that for $\mathbf{a}, \mathbf{b} \in \mathbb R^d$, we have $\mathbf{a} \mathbf{b}^\top + \mathbf{b} \mathbf{a}^\top \preceq \mathbf{a} \mathbf{a}^\top + \mathbf{b} \mathbf{b}^\top$ (which unfolds from $(\mathbf{a} - \mathbf{b}) (\mathbf{a} - \mathbf{b})^\top \succeq 0$).
Moreover, using the fact that $\bar\P_\perp\Sigmab\preceq\frac1{\gamma+1}\I$ and $\bar \P_\perp \Sigmab = \Sigmab \bar \P_\perp$, we obtain that
\begin{align*}
  &|\bar s - s|  = |\tr (\bar \P_\perp - \E[\I - \P_{-k}] )\Sigmab|   \le |\tr (\bar \P_\perp - \E[\P_\perp] )\Sigmab|   + |\tr \E[\P - \P_{-k}] \Sigmab| \\
  &= \frac12 \big|\tr\,(\I-\E[\P_\perp]\bar\P_\perp^{-1})\bar\P_\perp\Sigmab + \tr\,\bar\P_\perp (\I - \bar \P_\perp^{-1} \E[\P_\perp]) \Sigmab \big|   + \tr\,\E \left[ \frac{(\I - \P_{-k})\x_k\x_k^\top(\I - \P_{-k})}{\x_k^\top (\I - \P_{-k}) \x_k} \right] \Sigmab \\
  &\leq \frac12 \big|\tr\,(\I-\E[\P_\perp]\bar\P_\perp^{-1} + \I - \bar \P_\perp^{-1} \E[\P_\perp])\bar\P_\perp\Sigmab \big|   + 1 \\
  &\leq \frac{\gamma}2 \,\E\bigg[|\bar s-\hat s|\cdot
    \frac{\tr\,( (\I -\P_{-k})\x_k\x_k^\top + \x_k\x_k^\top (\I -\P_{-k})) \bar\P_\perp\Sigmab}{\tr\,(\I -\P_{-k})\x_k\x_k^\top}  \bigg] \\
  &+ \gamma\, \E\bigg[\frac{\tr\,(\I-\P_{-k})\x_k\x_k^\top(\I-\P_{-k})\bar\P_\perp\Sigmab}
    {\tr\,(\I-\P_{-k})\x_k\x_k^\top}\bigg] + 1\\
  &\leq \frac{\gamma}{\gamma+1}\Bigg(\E\bigg[|\bar s-\hat
    s| \cdot \frac{\x_k^\top (\I -\P_{-k}) \x_k}{\x_k^\top (\I -\P_{-k}) \x_k}  \bigg] + 1\Bigg) + 1\leq \frac{\gamma}{\gamma+1}\Big(|\bar
    s-s| + \E\big[|s-\hat s|\big] + 1\Big) + 1 \\
  &\leq \frac{\gamma}{\gamma+1}\Big(|\bar s- s| +
     C\sqrt{|\bar s- s|} + C\sqrt{2 r} + 1\Big) + 1.
\end{align*}

Solving for $|\bar s-s|$, we deduce that
\begin{align*}
  |\bar s-s| \leq C_1 \sqrt{r} + C_2,
  %\gamma(c\sqrt{k+\bar s}+2) + c^2\gamma^2\leq \gamma (c+1)\sqrt{k+\bar s} + c^2\gamma^2.
\end{align*}
so plugging back to \eqref{eq:bound-T11} we get $T_{1,1} \le C
\sqrt r$ and $\| \T_1\| \le \frac{C_\rho}{\sqrt r} $, thus completing
the proof.
\section{Convergence analysis of randomized iterative methods}
\label{a:newton}

Here, we discuss how our surrogate expressions for the expected
residual projection can be used to perform convergence analysis for
several randomized iterative optimization methods discussed in
Section~\ref{s:newton}.

\subsection{Generalized Kaczmarz method}
Generalized Kaczmarz \cite{generalized-kaczmarz} is an iterative
method for solving an $m\times n$ linear system
$\A\x=\b$, which uses a $k\times m$ sketching matrix $\S_t$ to reduce
the linear system and update an iterate $\x^t$ as follows:
  \begin{align*}
    \x^{t+1} = \argmin_\x\|\x-\x^t\|^2\quad\textnormal{subject to}\quad\S_t\A\x=\S_t\b.
  \end{align*}
Assume that $\x^*$ is the unique solution to the linear system $\A\x=\b$. In Theorems 4.1 and 4.6, \cite{generalized-kaczmarz} show that the
expected trajectory of the generalized Kaczmarz iterates, as
they converge to $\x^*$, is controlled
by the projection matrix $\P=(\S_t\A)^\dagger\S_t\A$ as follows:
\begin{align*}
  \text{(\cite{generalized-kaczmarz}, Theorem 4.1)}\qquad\quad \ \ \,
  \E[\x^{t+1}-\x^*]
  &= \big(\I-\E[\P]\big)\,\E[\x^t-\x^*],\\
  \text{(\cite{generalized-kaczmarz}, Theorem 4.6)}\qquad
  \E\big[\|\x^{t+1}-\x^*\|^2\big]
&\leq (1-\kappa)\,\E\big[\|\x^t-\x^*\|^2\big],
\text{ where }\kappa=\lambda_{\min}\big(\E[\P]\big).
\end{align*}
Both of these results depend on the expected projection
$\E[\P]$. The first one describes the expected trajectory of the
iterate, whereas the second one gives the worst-case convergence
rate in terms of the so-called \emph{stochastic condition number}
$\kappa$. We next demonstrate how Theorem \ref{t:main} can be used  in 
combination with the above results to
obtain convergence analysis for generalized Kaczmarz which is
formulated in terms of the spectral properties of $\A$. This includes
precise expressions for both the expected trajectory and $\kappa$. The following
result is a more detailed version of Corollary \ref{c:kaczmarz} from
Section \ref{s:newton}.
\begin{corollary}\label{c:kaczmarz2}
Let $\sigma_i$ denote the singular values of $\A$, and let $k$ denote
the size of sketch $\S_t$. Define:
\begin{align*}
\Delta_t=\x^t-\x^*\quad\text{and}\quad  \bar\Delta_{t+1} =
  (\gamma\A^\top\A+\I)^{-1}\E[\Delta_t]\quad\text{s.t.}\quad
  \sum_i\frac{\gamma\sigma_i^2}{\gamma\sigma_i^2+1}=k.
\end{align*}
  Suppose that $\S_t$ has i.i.d.~mean-zero sub-gaussian entries and let
  $r=\|\A\|_F^2/\|\A\|^2$ be the stable rank of $\A$. 
Assume that $\rho = r/k$ is a constant larger than $1$. Then, the
expected trajectory satisfies:
\begin{align}
  \big\|\E[\Delta_{t+1}] - \bar\Delta_{t+1}\big\|
  \leq \epsilon\cdot \|\bar\Delta_{t+1}\|,
  \quad\text{for}\quad\epsilon=O\big(\tfrac1{\sqrt
  r}\big).\label{eq:trajectory}
\end{align}
Moreover, we obtain the following worst-case convergence guarantee:
\begin{align}
  \E\big[\|\Delta_{t+1}\|^2\big]
  \leq \big(1-
  (\bar\kappa-\epsilon)\big)\,\E\big[\|\Delta_t\|^2\big],
  \quad\text{where}\quad
  \bar\kappa = \frac{\sigma_{\min}^2}{\sigma_{\min}^2+1/\gamma}.\label{eq:worst-case}
\end{align}
\end{corollary}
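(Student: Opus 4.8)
The plan is to feed the two conclusions of \cite{generalized-kaczmarz} (Theorems~4.1 and~4.6, recalled in Appendix~\ref{a:newton}) into the positive semi-definite sandwich of Theorem~\ref{t:main}, after rewriting everything through $\E[\P_\perp]=\I-\E[\P]$ and $\bar\P_\perp=(\gamma\A^\top\A+\I)^{-1}$. The first useful observation is that $\bar\kappa=\sigma_{\min}^2/(\sigma_{\min}^2+1/\gamma)$ satisfies $1-\bar\kappa=1/(\gamma\sigma_{\min}^2+1)=\lambda_{\max}(\bar\P_\perp)$, so the two displayed claims reduce to converting the psd bound on $\E[\P_\perp]$ into (i) a relative-error bound on the vector $\E[\P_\perp]\E[\Delta_t]$ and (ii) a bound on $\lambda_{\max}(\E[\P_\perp])$.

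I would prove the worst-case rate \eqref{eq:worst-case} first, since it needs only the upper half of Theorem~\ref{t:main}. By Theorem~4.6 of \cite{generalized-kaczmarz} we have $\E[\|\Delta_{t+1}\|^2]\leq(1-\kappa)\,\E[\|\Delta_t\|^2]$ with $\kappa=\lambda_{\min}(\E[\P])=1-\lambda_{\max}(\E[\P_\perp])$. The bound $\E[\P_\perp]\preceq(1+\epsilon)\bar\P_\perp$ gives $\lambda_{\max}(\E[\P_\perp])\leq(1+\epsilon)\lambda_{\max}(\bar\P_\perp)=(1+\epsilon)(1-\bar\kappa)$, hence $\kappa\geq\bar\kappa-\epsilon(1-\bar\kappa)\geq\bar\kappa-\epsilon$ because $1-\bar\kappa\in[0,1]$. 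Substituting $1-\kappa\leq 1-(\bar\kappa-\epsilon)$ yields \eqref{eq:worst-case}.

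For the trajectory \eqref{eq:trajectory} I would invoke Theorem~4.1 of \cite{generalized-kaczmarz}, which gives $\E[\Delta_{t+1}]=(\I-\E[\P])\E[\Delta_t]=\E[\P_\perp]\E[\Delta_t]$, while by definition $\bar\Delta_{t+1}=\bar\P_\perp\E[\Delta_t]$; thus $\E[\Delta_{t+1}]-\bar\Delta_{t+1}=(\E[\P_\perp]-\bar\P_\perp)\E[\Delta_t]$. The natural route is to pass to the eigenbasis of $\A^\top\A$, in which $\bar\P_\perp$ is diagonal with entries $1/(\gamma\sigma_i^2+1)$; there Theorem~\ref{t:main} reads coordinatewise as a relative error of at most $\epsilon$ on each eigenvalue, so with $\u=\E[\Delta_t]$ one gets $\|(\E[\P_\perp]-\bar\P_\perp)\u\|^2=\sum_i(\mu_i-\bar\mu_i)^2u_i^2\leq\epsilon^2\sum_i\bar\mu_i^2u_i^2=\epsilon^2\|\bar\P_\perp\u\|^2$, which is exactly \eqref{eq:trajectory}.

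The delicate point—and the step I expect to be the real obstacle—is that this coordinatewise argument needs $\E[\P_\perp]$ to be diagonal in the eigenbasis of $\A^\top\A$, i.e.\ to commute with $\bar\P_\perp$. This alignment holds exactly for orthogonally invariant sketches such as the Gaussian one, where the distribution of $\X^\dagger\X$ is invariant under rotations commuting with $\Sigmab$, and I would establish it there using the invariance argument already deployed in the reduction of Section~\ref{s:reduction}. Without commutativity the psd sandwich alone is insufficient: a purely off-diagonal perturbation of size $\epsilon\bar\P_\perp$ can move a small-eigenvalue direction by order $\epsilon\sqrt{\lambda_{\min}(\bar\P_\perp)}$ rather than $\epsilon\lambda_{\min}(\bar\P_\perp)$, so the relative bound \eqref{eq:trajectory} genuinely uses the alignment and not merely the eigenvalue bounds of Theorem~\ref{t:main}. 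The remaining bookkeeping—folding the $\sqrt{\rho/(\rho-1)}$ factor from Theorem~\ref{t:main} into the constant hidden by $O(1/\sqrt r)$—is routine.
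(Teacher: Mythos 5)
Your proof of the worst-case rate \eqref{eq:worst-case} is correct and coincides with the paper's argument. The genuine gap is in the trajectory bound \eqref{eq:trajectory}. Your argument hinges on $\E[\P_\perp]$ commuting with $\bar\P_\perp$, which you can only justify for orthogonally invariant sketches; the corollary is stated for arbitrary i.i.d.\ mean-zero sub-gaussian entries (Rademacher included), so even granting every step in the Gaussian case, the statement as given remains unproved. Moreover, your diagnosis that the psd sandwich alone is insufficient is wrong, and the reason is a fact you overlooked: in this regime $\bar\P_\perp$ is well conditioned. By \eqref{eq:bound-bar-gamma-P} (established in the proof of Theorem~\ref{t:main-tech}), after normalizing $\|\A\|=1$ one has $\gamma\le\frac1{\rho-1}$, hence
\begin{align*}
\tfrac{\rho-1}{\rho}\,\I\ \preceq\ \bar\P_\perp\ \preceq\ \I .
\end{align*}
Your own off-diagonal example quantifies the worst-case loss from non-alignment as a factor of order $\sqrt{\lambda_{\max}(\bar\P_\perp)/\lambda_{\min}(\bar\P_\perp)}$; here that is at most $\sqrt{\rho/(\rho-1)}$, a constant absorbed into $O(1/\sqrt r)$, not a divergent quantity.

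This is exactly how the paper proceeds, with no commutativity anywhere. Set $\D=\E[\P_\perp]-\bar\P_\perp$ and $\M=\bar\P_\perp^{-\frac12}\D\,\bar\P_\perp^{-\frac12}$, so that the two-sided bound of Theorem~\ref{t:main} gives $\|\M\|\le\epsilon$. Since $\bar\Delta_{t+1}=\bar\P_\perp\E[\Delta_t]$ exactly and $\E[\Delta_{t+1}]=\E[\P_\perp]\E[\Delta_t]$ by Theorem~4.1 of \cite{generalized-kaczmarz},
\begin{align*}
\|\E[\Delta_{t+1}]-\bar\Delta_{t+1}\|
=\big\|\D\,\bar\P_\perp^{-1}\bar\Delta_{t+1}\big\|
=\big\|\bar\P_\perp^{\frac12}\M\,\bar\P_\perp^{-\frac12}\bar\Delta_{t+1}\big\|
\le \epsilon\,\sqrt{\tfrac{\rho}{\rho-1}}\,\|\bar\Delta_{t+1}\|,
\end{align*}
using $\|\bar\P_\perp^{\frac12}\|\le1$ and $\|\bar\P_\perp^{-\frac12}\|\le\sqrt{\rho/(\rho-1)}$. (The paper packages the same computation as $\bar\P_\perp^{-1}\D^2\bar\P_\perp^{-1}\preceq\frac{\rho}{\rho-1}\,\epsilon^2\,\I$ and then sandwiches this with $\E[\Delta_t]$.) So Theorem~\ref{t:main} applied as a psd sandwich already yields \eqref{eq:trajectory} for every sub-gaussian sketch; your proposal both fails to cover the stated generality and misidentifies the key ingredient, which is the spectral lower bound on $\bar\P_\perp$ rather than alignment of eigenbases.
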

\begin{remark}
Our worst-case convergence guarantee \eqref{eq:worst-case}
  requires the matrix $\A$ to be sufficiently well-conditioned so
  that $\bar\kappa-\epsilon>0$. However, we believe that our surrogate expression
  $\bar\kappa$ for the stochastic condition number is far more
  accurate than suggested by the current analysis. 
\end{remark}

\subsection{Randomized Subspace Newton}
Randomized Subspace Newton (RSN, \cite{Gower2019})  is a randomized
Newton-type method  for minimizing a
smooth, convex and twice differentiable function
$f:\R^d\times \R$. The iterative 
update for this algorithm is defined as follows:
\begin{align*}
  \x^{t+1} = \x^t - \frac1L\S_t^\top(\S_t\H(\x^t)\S_t^\top)^\dagger\S_t\g(\x^t),
\end{align*}
where $\H(\x^t)$ and $\g(\x^t)$ are the Hessian and gradient of $f$ at
$\x^t$, respectively, whereas $\S_t$ is a $k\times d$ sketching
matrix (with $k\ll d$) which is refreshed at every iteration. Here,
$L$ denotes the \emph{relative smoothness} constant defined by
\cite{Gower2019} in Assumption 1, which also defines relative strong
convexity, denoted by $\mu$. In Theorem 2, they prove the
following convergence guarantee for RSN:
\begin{align*}
  \E[f(\x^t)] - f(\x^*) \leq \Big(1-\kappa\frac{\mu}{L}\Big)^t(f(\x^0)-f(\x^*)),
\end{align*}
where $\kappa =\min_{\x}\kappa(\x)$ and
$\kappa(\x)=\lambda_{\min}^+(\E[\P(\x)])$ is the smallest positive
eigenvalue of the expectation of the projection matrix
$\P(\x)=
\H^{\frac12}(\x)\S_t^\top(\S_t\H(\x)\S_t^\top)^\dagger\S_t\H^{\frac12}(\x)$. Our
results lead to the following surrogate expression for this expected
projection when the sketch is sub-gaussian:
\begin{align*}
  \E[\P(\x)] \simeq \H(\x)\big(\H(\x) +
  \tfrac1{\gamma(\x)}\I\big)^{-1}\quad\text{for}\quad\gamma(\x)>0
  \quad\text{s.t.}\quad\tr\,\H(\x)\big(\H(\x) +
  \tfrac1{\gamma(\x)}\I\big)^{-1} = k.
\end{align*}
Thus, the condition number $\kappa$ of RSN can be estimated using the
following surrogate expression:
\begin{align*}
\kappa\simeq \bar\kappa :=
  \min_\x\frac{\lambda_{\min}^+(\H(\x))}{\lambda_{\min}^+(\H(\x)) +
  1/\gamma(\x)}. 
\end{align*}
Just as in Corollary \ref{c:kaczmarz2}, an approximation
of the form $|\bar\kappa-\kappa|\leq\epsilon$ can be shown
from Theorem \ref{t:main}.
\begin{corollary}\label{c:rsn}
  Suppose that sketch $\S_t$ has size $k$ and i.i.d.~mean-zero sub-gaussian entries. Let
  $r=\min_\x\tr\,\H(\x)/\|\H(\x)\|$ be the (minimum) stable rank of
  the (square root) Hessian and assume that $\rho = r/k$ is a constant larger than
  $1$. Then,
  \begin{align*}
    |\kappa-\bar\kappa| \leq O\big(\tfrac1{\sqrt r}\big).
  \end{align*}
\end{corollary}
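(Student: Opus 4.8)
The plan is to reduce the statement to a fixed-$\x$ application of Theorem~\ref{t:main}, follow it by an eigenvalue-perturbation argument, and then take the minimum over $\x$. First I would fix $\x$ and set $\A=\H^{\frac12}(\x)$, a symmetric p.s.d.\ matrix, so that $\A^\top\A=\H(\x)$. Using the identity $(\S_t\A)^\dagger=\A\S_t^\top(\S_t\A\A\S_t^\top)^\dagger$, one checks that
\[
\P(\x)=\H^{\frac12}(\x)\S_t^\top\big(\S_t\H(\x)\S_t^\top\big)^\dagger\S_t\H^{\frac12}(\x)=(\S_t\A)^\dagger\S_t\A,
\]
so $\P(\x)$ is exactly the residual-complement projection of Theorem~\ref{t:main} for the data matrix $\A$, with $\P_\perp(\x)=\I-\P(\x)$. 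The stable rank of $\A$ equals $r(\x):=\tr\,\H(\x)/\|\H(\x)\|\geq r$, and $\rho(\x)=r(\x)/k\geq\rho>1$, so Theorem~\ref{t:main} applies with surrogate $\bar\P_\perp(\x)=(\gamma(\x)\H(\x)+\I)^{-1}$, yielding
\[
(1-\epsilon_\x)\,\bar\P_\perp(\x)\preceq\E[\P_\perp(\x)]\preceq(1+\epsilon_\x)\,\bar\P_\perp(\x),\qquad \epsilon_\x=O\big(\tfrac1{\sqrt{r(\x)}}\big).
\]
Because $r(\x)\geq r$ and the constant in Theorem~\ref{t:main} may be taken non-increasing in $\rho$ (it is built from factors $\tfrac1{\rho-1}$ and $\sqrt{\tfrac{\rho}{\rho-1}}$), the per-point error is uniformly bounded: $\epsilon_\x\leq\epsilon:=C_\rho/\sqrt r=O(1/\sqrt r)$ for every $\x$.

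Next I would pass from the residual projection to $\E[\P(\x)]$ and its surrogate $\bar\P(\x)=\I-\bar\P_\perp(\x)=\H(\x)\big(\H(\x)+\tfrac1{\gamma(\x)}\I\big)^{-1}$. Subtracting the two-sided bound from $\I$ gives $\bar\P(\x)-\epsilon\,\bar\P_\perp(\x)\preceq\E[\P(\x)]\preceq\bar\P(\x)+\epsilon\,\bar\P_\perp(\x)$, and since $\zero\preceq\bar\P_\perp(\x)\preceq\I$ this yields the spectral-norm estimate $\|\E[\P(\x)]-\bar\P(\x)\|\leq\epsilon$.

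The heart of the argument — and the step I expect to be the main obstacle — is converting this into a bound on the \emph{smallest positive} eigenvalues, i.e.\ on $|\kappa(\x)-\bar\kappa(\x)|$ where $\kappa(\x)=\lambda_{\min}^+(\E[\P(\x)])$ and $\bar\kappa(\x)=\lambda_{\min}^+(\bar\P(\x))=\tfrac{\lambda_{\min}^+(\H(\x))}{\lambda_{\min}^+(\H(\x))+1/\gamma(\x)}$. Both $\E[\P(\x)]$ and $\bar\P(\x)$ are p.s.d.\ and vanish on $\ker\H(\x)$, so I would restrict both to $V:=\mathrm{range}(\H(\x))$, on which $\bar\P(\x)|_V$ is positive definite with smallest eigenvalue exactly $\bar\kappa(\x)$. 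Weyl's inequality applied on $V$ (the difference has spectral norm at most $\epsilon$) gives $|\lambda_j(\E[\P(\x)]|_V)-\lambda_j(\bar\P(\x)|_V)|\leq\epsilon$ for every $j$; in particular the smallest eigenvalue of $\E[\P(\x)]|_V$ lies in $[\bar\kappa(\x)-\epsilon,\bar\kappa(\x)+\epsilon]$. The subtle point is that $\lambda_{\min}^+$ is the smallest \emph{nonzero} eigenvalue, so I must ensure $\E[\P(\x)]$ has the same rank as $\bar\P(\x)$ (namely $\rank\H(\x)$), or else the indices compared by Weyl can mismatch. This holds precisely when $\bar\kappa(\x)-\epsilon>0$: then $\E[\P(\x)]|_V\succ\zero$, its range is all of $V$, and $\lambda_{\min}^+(\E[\P(\x)])=\lambda_{\min}(\E[\P(\x)]|_V)$, giving $|\kappa(\x)-\bar\kappa(\x)|\leq\epsilon$. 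As in the remark following Corollary~\ref{c:kaczmarz2}, this well-conditioning requirement is the price of controlling the smallest positive eigenvalue rather than the full operator.

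Finally, since $\kappa=\min_\x\kappa(\x)$ and $\bar\kappa=\min_\x\bar\kappa(\x)$, the assumption $\bar\kappa>\epsilon$ forces $\bar\kappa(\x)\geq\bar\kappa>\epsilon$ for \emph{every} $\x$, so the per-point bound holds uniformly. I would then invoke the elementary fact $|\min_\x f(\x)-\min_\x g(\x)|\leq\sup_\x|f(\x)-g(\x)|$, which with the uniform estimate $|\kappa(\x)-\bar\kappa(\x)|\leq\epsilon$ gives $|\kappa-\bar\kappa|\leq\epsilon=O(1/\sqrt r)$, as claimed. The only genuinely new work beyond Theorem~\ref{t:main} is the eigenvalue-perturbation step and the uniformity-over-$\x$ bookkeeping; the reduction and the minimization step are routine.
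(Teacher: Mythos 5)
Your proposal is correct in substance and follows essentially the route the paper intends. The paper's entire proof of Corollary~\ref{c:rsn} is the single sentence that it ``follows analogously from Theorem~\ref{t:main},'' with the template being the proof of Corollary~\ref{c:kaczmarz2}: apply Theorem~\ref{t:main} pointwise in $\x$ to $\A=\H^{\frac12}(\x)$, convert the two-sided p.s.d.\ sandwich into an eigenvalue-perturbation bound, and take the minimum over $\x$. Your reduction (via $(\S_t\A)^\dagger=\A\S_t^\top(\S_t\H(\x)\S_t^\top)^\dagger$, stable rank $\tr\H(\x)/\|\H(\x)\|$), your Weyl step on $V=\mathrm{range}\,\H(\x)$, and the elementary $|\min_\x f-\min_\x g|\leq\sup_\x|f-g|$ step are exactly the omitted ``analogous'' details; the Weyl argument is equivalent to the paper's own manipulation in the Kaczmarz proof, $\lambda_{\min}(\E[\P])=1-\lambda_{\max}(\E[\P_\perp])\geq\lambda_{\min}(\I-\bar\P_\perp)-\epsilon$, run in both directions. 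Your uniformity-over-$\x$ bookkeeping (monotonicity of the constant $C_\rho$ in $\rho$) is a detail the paper glosses over entirely but is handled correctly.

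The one genuine deviation: you establish the bound only under the extra hypothesis $\bar\kappa>\epsilon$, which is not an assumption of the corollary, so strictly speaking you prove a conditional version of the statement. The good news is that this hypothesis is removable. For any unit vector $\v\in V$ one has $\v^\top\E[\P(\x)]\v=\E\big[\|\P(\x)\v\|^2\big]>0$: since the rows of $\S_t$ have identity covariance, $\S_t\H^{\frac12}(\x)\v\neq\zero$ with positive probability, so $\v$ is not orthogonal to the row space of $\S_t\H^{\frac12}(\x)$ almost surely. Hence $\E[\P(\x)]$ restricted to $V$ is \emph{always} positive definite, its rank equals $\rank\H(\x)$ unconditionally, and the index-matching needed for your Weyl comparison (i.e.\ $\lambda_{\min}^+(\E[\P(\x)])=\lambda_{\min}$ of the restriction to $V$) holds with no well-conditioning requirement. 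With that observation your argument proves the corollary exactly as stated. To your credit, the rank-mismatch subtlety you flagged is real and is nowhere addressed in the paper --- its only detailed proof (Corollary~\ref{c:kaczmarz2}) concerns $\lambda_{\min}$ of a nonsingular $\A^\top\A$, where the issue cannot arise --- so identifying it, even with an overly conservative fix, goes beyond what the authors wrote down.
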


\subsection{Jacobian Sketching}
Jacobian Sketching (JacSketch, \cite{jacsketch}) defines an $n\times n$ positive semi-definite weight matrix
$\W$, and combines it with an $k\times n$ sketching matrix $\S$
(which is refreshed at every iteration of the algorithm), to
implicitly construct the following projection matrix:
\begin{align*}
  \Pi_\S = \S^\top(\S\W\S^\top)^\dagger\S\W,
\end{align*}
which is used to sketch the Jacobian at the current iterate (for the
complete method, we refer to their Algorithm 1).
The convergence rate guarantee given in their Theorem 3.6 for
JacSketch is given in terms of the Lyapunov function:
\begin{align*}
 \Psi^t = \|\x^t-\x^*\|^2 + \frac{\alpha}{2\mathcal{L}_2}\|\mathbf{J}^t-\nabla F(\x^*)\|_{\W^{-1}}^2,
\end{align*}
where $\alpha$ is the step size used by the algorithm. Under
appropriate choice of the step-size, Theorem~3.6 states that:
\begin{align*}
\E[\Psi^t] \leq \bigg(1 - \mu\,\min\Big\{\frac{1}{4\mathcal{L}_1},\frac{\kappa}{4\mathcal{L}_2\rho/n^2+\mu}\Big\}\bigg)^t\cdot\Psi^0,
\end{align*}
where $\kappa=\lambda_{\min}(\E[\Pi_\S])$ is the 
\emph{stochastic condition number} analogous to the one defined for
the Generalized Kaczmarz method, $n$ is the data size and
parameters $\rho$, $\mathcal{L}_1$, $\mathcal{L}_2$ and $\mu$ are
problem dependent constants defined in Theorem~3.6. Similarly as
before, we can use our surrogate expressions 
for the expected residual projection to obtain a precise estimate for
the stochastic condition number 
$\kappa$ under sub-gaussian sketching:
\begin{align*}
  \kappa\simeq\bar\kappa :=\frac{\lambda_{\min}(\W)}{\lambda_{\min}(\W) +
  1/\gamma}
  \quad\text{for}\quad\gamma>0\quad\text{s.t.}\quad\tr\,\W(\W +
  \tfrac1\gamma\I)^{-1} = k.
\end{align*}
\begin{corollary}\label{c:jacsketch}
  Suppose $\S_t$ has size $k$ and i.i.d.~mean-zero sub-gaussian entries. Let
  $r=\tr\,\W/\|\W\|$ be the stable rank of $\W^{\frac12}$ and assume
  that $\rho = r/k$ is a constant larger than 
  $1$. Then,
  \begin{align*}
    |\kappa-\bar\kappa| \leq O\big(\tfrac1{\sqrt r}\big).
  \end{align*}
\end{corollary}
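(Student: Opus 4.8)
The plan is to recognize $\kappa=\lambda_{\min}(\E[\Pi_\S])$ as the smallest eigenvalue of an expected \emph{symmetric} residual projection of the type covered by Theorem~\ref{t:main}, and then to convert the two-sided p.s.d.\ sandwich of that theorem into a two-sided scalar bound on $\lambda_{\min}$. The first difficulty is that $\Pi_\S=\S^\top(\S\W\S^\top)^\dagger\S\W$ is not symmetric, so Theorem~\ref{t:main} does not apply to it directly. I would remove this by setting $\A=\W^{\frac12}$ and comparing $\Pi_\S$ with the symmetric orthogonal projection onto the row space of $\S\W^{\frac12}$,
\begin{align*}
  \P_{\!\W}:=(\S\W^{\frac12})^\dagger(\S\W^{\frac12})=\W^{\frac12}\S^\top(\S\W\S^\top)^\dagger\S\W^{\frac12}.
\end{align*}
Since $\W\succ0$ in the JacSketch setting (its Lyapunov function involves $\W^{-1}$), one checks directly that $\Pi_\S=\W^{-\frac12}\P_{\!\W}\W^{\frac12}$. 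As conjugation by the deterministic matrix $\W^{\frac12}$ commutes with expectation, $\E[\Pi_\S]=\W^{-\frac12}\E[\P_{\!\W}]\W^{\frac12}$ is similar to the symmetric matrix $\E[\P_{\!\W}]$; the two therefore share a (real) spectrum and $\kappa=\lambda_{\min}(\E[\P_{\!\W}])$.

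Next I would apply Theorem~\ref{t:main} with $\A=\W^{\frac12}$, so that $\A^\top\A=\W$, the stable rank is $r=\tr\,\W/\|\W\|$, and the residual projection is $\P_\perp=\I-\P_{\!\W}$. The theorem yields $\bar\P_\perp=(\gamma\W+\I)^{-1}$ and
\begin{align*}
  (1-\epsilon)\,\bar\P_\perp\preceq\I-\E[\P_{\!\W}]\preceq(1+\epsilon)\,\bar\P_\perp,\qquad\epsilon=O(\tfrac1{\sqrt r}),
\end{align*}
where the defining condition $\tr\,\bar\P_\perp=n-k$ is precisely $\tr\,\W(\W+\tfrac1\gamma\I)^{-1}=k$, i.e.\ the same $\gamma$ that appears in $\bar\kappa$. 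Rearranging gives $\I-(1+\epsilon)\bar\P_\perp\preceq\E[\P_{\!\W}]\preceq\I-(1-\epsilon)\bar\P_\perp$. Because $\bar\P_\perp=(\gamma\W+\I)^{-1}$ shares its eigenvectors with $\W$, its largest eigenvalue is $\lambda_{\max}(\bar\P_\perp)=(\gamma\lambda_{\min}(\W)+1)^{-1}$, and hence $\bar\kappa=\lambda_{\min}(\I-\bar\P_\perp)=1-\lambda_{\max}(\bar\P_\perp)$ recovers the claimed surrogate.

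Finally I would pass to $\lambda_{\min}$ using its monotonicity under $\preceq$ (if $\mathbf M_1\preceq\mathbf M_2$ then $\lambda_{\min}(\mathbf M_1)\le\lambda_{\min}(\mathbf M_2)$) together with the identity $\lambda_{\min}(\I-c\,\bar\P_\perp)=1-c\,\lambda_{\max}(\bar\P_\perp)$ for $c>0$. Applying both to the ends of the sandwich gives
\begin{align*}
  \bar\kappa-\epsilon\,\lambda_{\max}(\bar\P_\perp)\ \le\ \kappa=\lambda_{\min}(\E[\P_{\!\W}])\ \le\ \bar\kappa+\epsilon\,\lambda_{\max}(\bar\P_\perp),
\end{align*}
and since $\bar\P_\perp\preceq\I$ forces $\lambda_{\max}(\bar\P_\perp)\le1$, I would conclude $|\kappa-\bar\kappa|\le\epsilon=O(\tfrac1{\sqrt r})$. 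The main obstacle is the opening step: because $\Pi_\S$ is non-symmetric, the residual-projection machinery only becomes available after the similarity reduction to $\P_{\!\W}$ (and, should one wish to admit singular $\W$, after a perturbation argument $\W\mapsto\W+\delta\I$ with $\delta\to0$ that still identifies the spectrum of $\E[\Pi_\S]$ with that of the p.s.d.\ matrix $\E[\P_{\!\W}]$). Everything after this identification is routine eigenvalue perturbation.
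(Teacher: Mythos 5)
Your proposal is correct and takes essentially the same route as the paper: the paper proves the eigenvalue bound explicitly only for Corollary~\ref{c:kaczmarz2} --- apply Theorem~\ref{t:main}, then use $\lambda_{\min}(\E[\P]) = 1-\lambda_{\max}(\E[\P_{\perp}]) \geq \lambda_{\min}(\I-\bar\P_{\perp})-\epsilon$ via $\lambda_{\max}(\bar\P_{\perp})\leq 1$ --- and then states that Corollary~\ref{c:jacsketch} ``follows analogously.'' Your opening similarity reduction $\Pi_{\S} = \W^{-\frac12}\P_{\W}\W^{\frac12}$ (with $\P_{\W}=(\S\W^{\frac12})^\dagger\S\W^{\frac12}$), which handles the non-symmetry of the JacSketch projection before invoking Theorem~\ref{t:main} with $\A=\W^{\frac12}$, is precisely the detail the paper leaves implicit, and the remainder of your argument (monotonicity of $\lambda_{\min}$ under $\preceq$, identification of $\bar\kappa$ with $1-\lambda_{\max}(\bar\P_{\perp})$, and $\lambda_{\max}(\bar\P_{\perp})\le 1$) coincides with the paper's.
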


\subsection{Omitted proofs}
\begin{proofof}{Corollary}{\ref{c:kaczmarz2}}
Using Theorem \ref{t:main}, for $\bar\P_{\perp}$ as defined in \eqref{eq:surrogate}, we have
  \begin{align*}
    (1-\epsilon)\bar\P_{\perp}\preceq \I-\E[\P] =
    \E[\P_{\perp}]\preceq(1+\epsilon)\bar\P_{\perp},
    \quad\text{where}\quad\epsilon=O\big(\tfrac 1{\sqrt r}\big).
  \end{align*}
In particular, this implies that
$\|\bar\P_{\perp}^{-\frac12}(\E[\P_{\perp}]-\bar\P_{\perp})\bar\P_{\perp}^{-\frac12}\|\leq\epsilon$.
Moreover, in the proof of Theorem \ref{t:main-tech} we showed that
  $\frac{\rho-1}\rho\I\preceq\bar\P_{\perp}\preceq\I$, see
  \eqref{eq:bound-bar-gamma-P}, so it follows that:
  \begin{align*}
    \bar\P_{\perp}^{-1}(\E[\P_{\perp}]-\bar\P_{\perp})^2\bar\P_{\perp}^{-1}\preceq
    \frac{\rho}{\rho-1}\big(\bar\P_{\perp}^{-\frac12}(\E[\P_{\perp}]-\bar\P_{\perp})\bar\P_{\perp}^{-\frac12}\big)^2
    \preceq     \frac{\rho}{\rho-1}\,\epsilon^2\cdot \I,
  \end{align*}
  where note that  $\frac{\rho}{\rho-1}\,\epsilon^2=O(1/r)$, since
  $\rho$ is treated as a constant. Thus we conclude that:
  \begin{align*}
    \|\E[\Delta_{t+1}]-\bar\Delta_{t+1}\|^2
    &=\E[\Delta_t]^\top(\E[\P_{\perp}]-\bar\P_{\perp})^2\E[\Delta_t]
    \\
    &\leq O(1/r)\cdot \E[\Delta_t]^\top\bar\P_{\perp}^2\E[\Delta_t] =
      O(1/r)\cdot \|\bar\Delta_{t+1}\|^2,
  \end{align*}
  which completes the proof of \eqref{eq:trajectory}. To show
  \eqref{eq:worst-case}, it suffices to observe that
  \begin{align*}
    \lambda_{\min}(\E[\P])
    = 1- \lambda_{\max}(\E[\P_{\perp}])
    \geq 1-(1+\epsilon)\lambda_{\max}(\bar\P_{\perp})
    \geq \lambda_{\min}(\I-\bar\P_{\perp}) - \epsilon,
  \end{align*}
  which completes the proof since
  $\I-\bar\P_{\perp}=\gamma\A^\top\A(\gamma\A^\top\A+\I)^{-1}$. 
\end{proofof}

Corollaries \ref{c:rsn} and \ref{c:jacsketch} follow analogously from
Theorem \ref{t:main}.

\begin{figure}
  \centering
  \includegraphics[width=.47\textwidth]{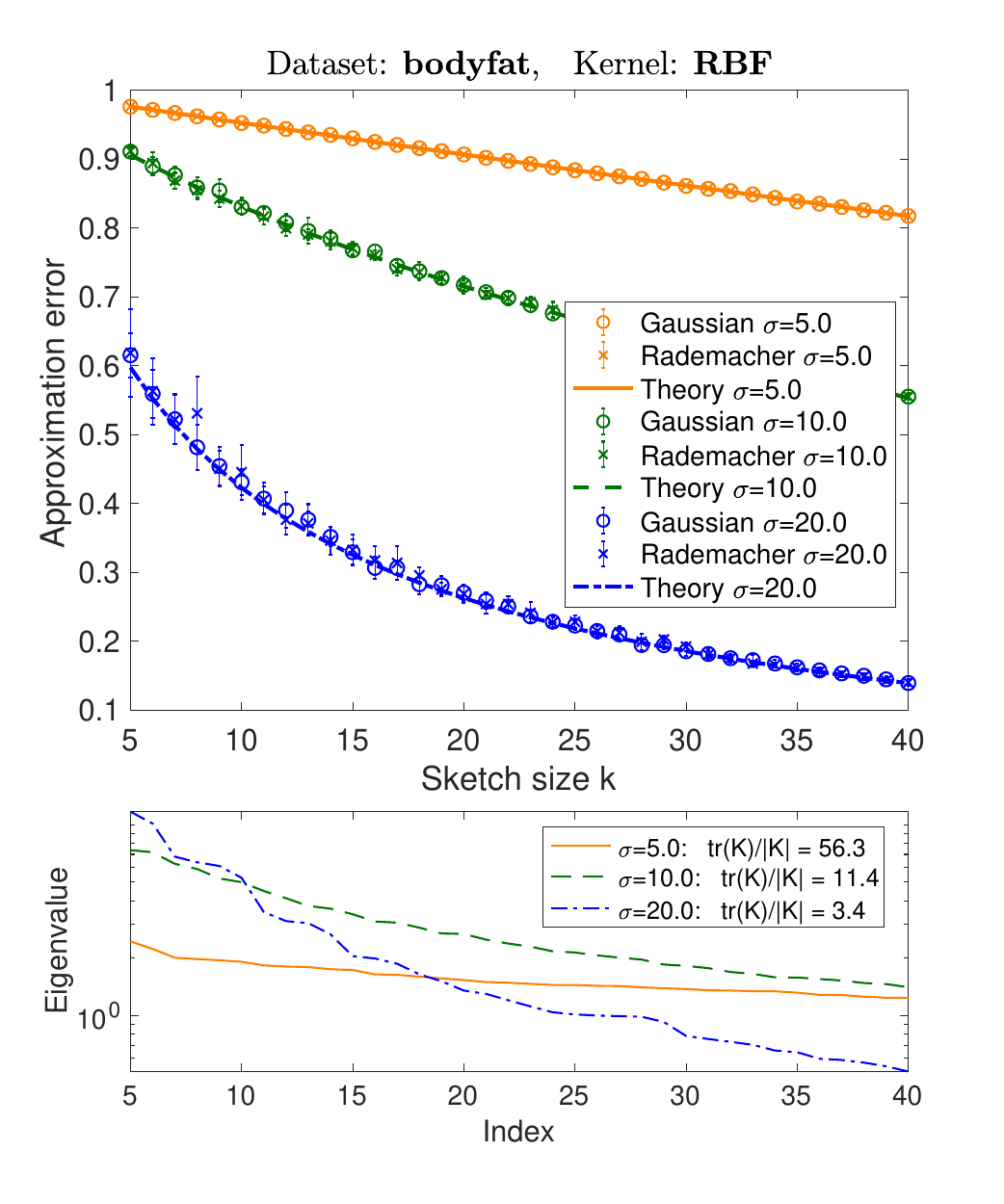}\nobreak\includegraphics[width=.47\textwidth]{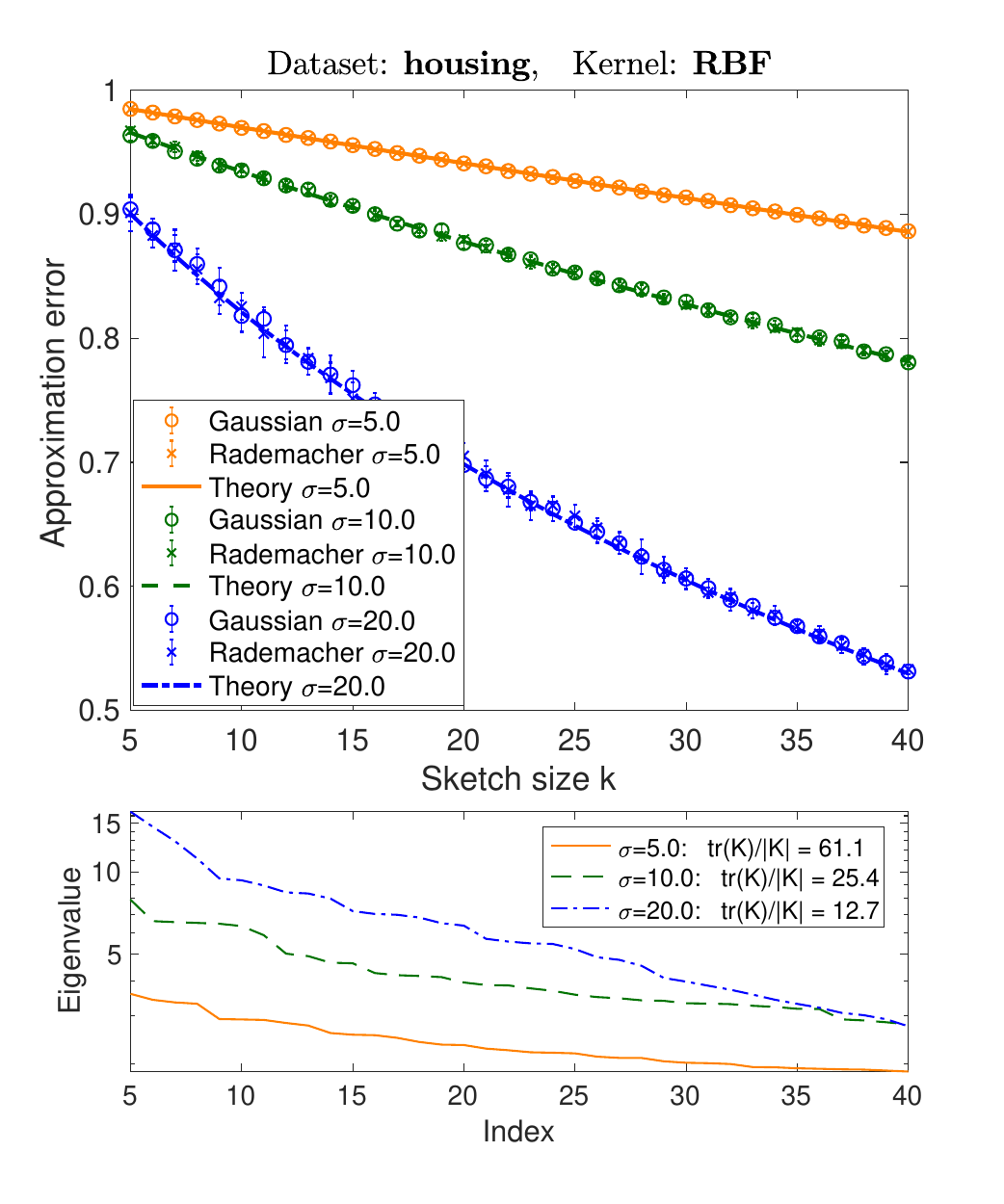}
  \includegraphics[width=.47\textwidth]{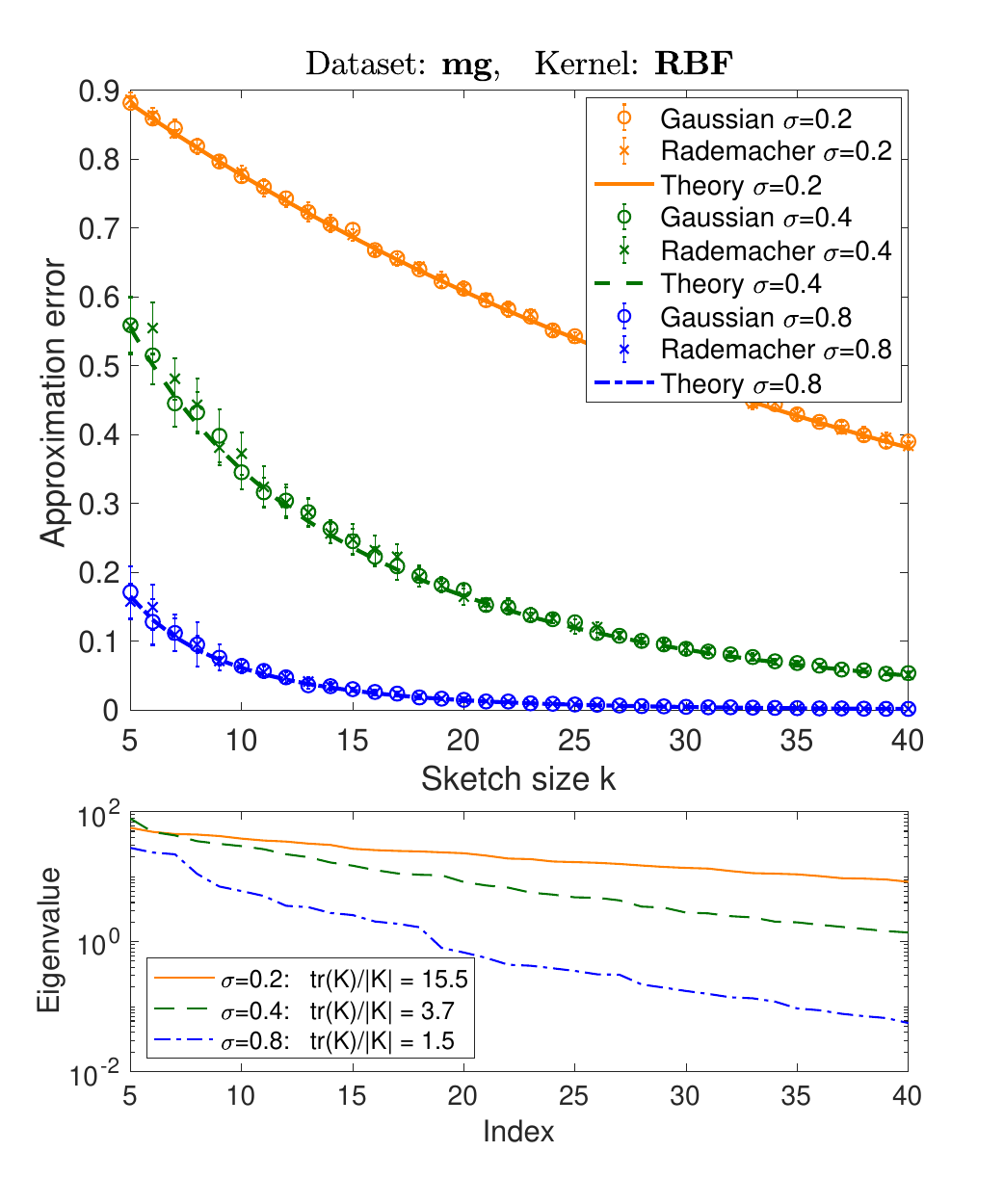}\nobreak\includegraphics[width=.47\textwidth]{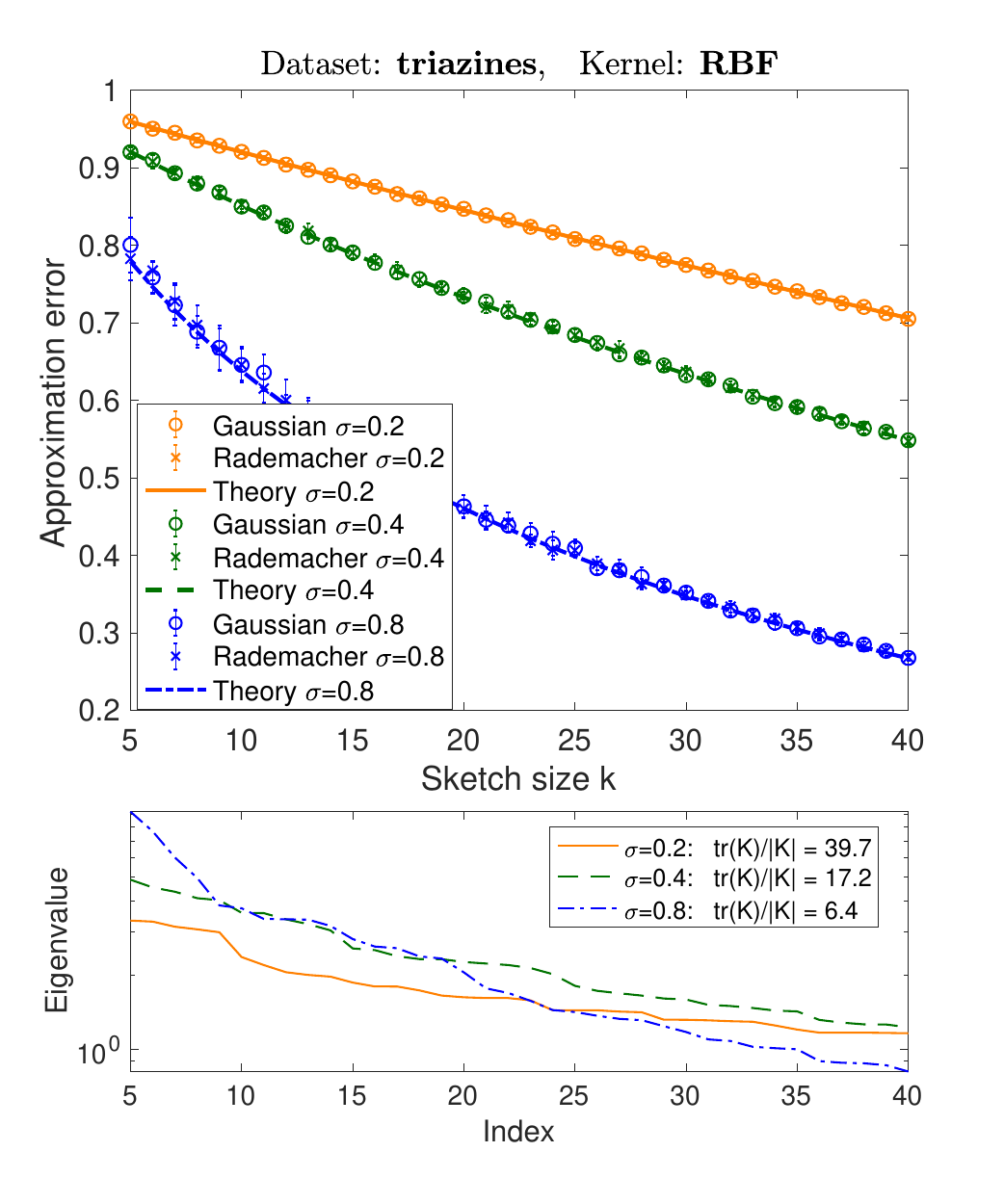} 
  \caption{Theoretical predictions versus approximation error for the
    sketched Nystr\"om with the RBF kernel, using Gaussian and
    Rademacher sketches (spectral decay shown at the bottom).}\label{f:nystrom2}
\end{figure}

\section{Additional empirical results}
\label{a:experiments}

We complement the results of Section \ref{s:experiments} with
empirical results on four additional libsvm datasets \cite{libsvm} (bringing
the total number of benchmark datasets to eight), which further
establish the accuracy of our surrogate expressions for the low-rank approximation
error. Similary as in Figure \ref{f:nystrom}, we use the sketched
Nystr\"om method \cite{revisiting-nystrom} with the RBF kernel
$k(\a_i,\a_j)=\exp(-\|\a_i-\a_j\|^2/(2\sigma^2))$, for several values
of the parameter $\sigma$. The values of $\sigma$ were chosen so as to
demonstrate the effectiveness of our theoretical predictions both when the
stable rank is moderately large and when it is very small.

In Figure \ref{f:nystrom2} we show the results for both Gaussian and
Rademacher sketches. These results reinforce the conclusions we made in
Section \ref{s:experiments}: our theoretical estimates are very
accurate in all cases, for both sketching methods, and even when the
stable rank is close to 1 (a regime that is not supported by the
current theory).

\end{document}